\newtheorem{theorem}{Theorem}[section]
\newtheorem{lemma}[theorem]{Lemma}
\newcommand{\eat}[1]{}
\newcommand{\naheed}[1]{{\color{black}#1}}
\newcommand{\jess}[1]{{\leavevmode\color{black}#1}}
\newcommand{\yli}[1]{{\color{black}#1}}
\title{Finite Volume Features, Global Geometry Representations, and Residual Training for Deep Learning-based CFD Simulation
}
\author {
    % Authors
    Loh Sher En Jessica \textsuperscript{\rm 1,\rm 3}\equalcontrib,
    Naheed Anjum Arafat\textsuperscript{\rm 1}\equalcontrib,
    Wei Xian Lim\textsuperscript{\rm 2},
    Wai Lee Chan\textsuperscript{\rm 2},
    Adams Wai Kin Kong\textsuperscript{\rm 1,\rm 3}
}
\begin{document}
\maketitle

\begin{abstract}
  Computational fluid dynamics (CFD) simulation is an irreplaceable modelling step in many engineering designs, but it is often computationally expensive. Some graph neural network (GNN)-based CFD methods have been proposed. However, the current methods inherit the weakness of traditional numerical simulators, as well as ignore the cell characteristics in the mesh used in the finite volume method, a common method in practical CFD applications. Specifically, the input nodes in these GNN methods have very limited information about any object immersed in the simulation domain and its surrounding environment. Also, the cell characteristics of the mesh such as cell volume, face surface area, and face centroid are not included in the message-passing operations in the GNN methods. To address these weaknesses, this work proposes two novel geometric representations: Shortest Vector (SV) and Directional Integrated Distance (DID). Extracted from the mesh, the SV and DID provide global geometry perspective to each input node, thus removing the need to collect this information through message-passing. This work also introduces the use of Finite Volume Features (FVF) in the graph convolutions as node and edge attributes, enabling its message-passing operations to adjust to different nodes. Finally, this work is the first to demonstrate how residual training, with the availability of low-resolution data, can be adopted to improve the flow field prediction accuracy. Experimental results on two datasets with \jess{five} different state-of-the-art GNN methods for CFD indicate that \jess{SV, DID, FVF and residual training can effectively reduce the predictive error of current GNN-based methods by as much as 41\%}.
\end{abstract}

%\section{Supplementary Material}
%
%Authors may wish to optionally include extra information (complete proofs, additional experiments and plots) in the appendix. All such materials should be part of the supplemental material (submitted separately) and should NOT be included in the main submission.

\section{Introduction}
\label{sec:intro}
Computational fluid dynamics (CFD) is a branch of fluid dynamics in which physical phenomena involving fluid flow are modelled mathematically as partial differential equations (PDEs), like the Navier–Stokes (NS) equations, and solved computationally via numerical analysis. CFD is applied to a wide range of scientific and engineering problems that requires the flow of the fluid and its interaction with surfaces to be simulated, including aircraft aerodynamic optimisation~\citep{martins2022}, combustion engine design~\citep{vijashree2018}, marine hydrodynamics prediction~\citep{demirel2021}, microfluidic device evaluation~\citep{chaves2020}, and urban planning~\citep{zhang2021}. Despite its versatility, CFD simulation is generally slow and/or costly due to the need for both high spatial and temporal resolutions % the governing PDEs.
\yli{to solve the governing PDEs accurately.}

Researchers have exploited deep learning to accelerate CFD simulation. Multilayer perceptron (MLPs) and convolutional neural networks (CNNs) have been considered. However, they both do not fit industrial requirements in many cases because of their restrictions in the input fields and architectures. MLPs cannot handle high dimensional input, which would increase the number of training parameters dramatically and cause overlearning. Thus, the current MLP methods such as PINN consider each spatial location separately and ignore their relationship~\citep{raissi2019}. In other words, they do not have an explicit scheme for information exchange between nodes. It should be emphasised that flow of a fluid in a particular location would influence % the flow of a neighbor region.
that of a neighbour region. Although CNNs allow this information exchange, they can apply only to the flow field represented on a fixed, regular grid. In many industrial simulations, the computational domain may contain objects with a complex geometry, such as a turbine blade, and hence may have an irregular mesh, so CNNs are not suitable. To fit the industrial requirements and bypass the limitations, researchers have recently employed graph neural networks (GNN) for CFD problems, such as laminar and turbulent flow prediction and geometry optimisation~\citep{liu2020,baque2018}. Some of these studies have considered physical properties or constraints in their GNN. For example, \citet{horie2022} designed boundary encoders to impose Dirichlet boundary conditions and \citet{airfrans} separated the nodes on airfoils and other nodes in their objective for computing drag and lift coefficients. 

The current GNN methods neglect the cell characteristics in the mesh, such as cell volume, face surface area, and face centroid, which are % heavily used in 
\yli{core components of} the finite volume method, a widely adopted CFD method in industry~\citep{cfdfvbook}. CFD simulators based on the finite volume method in general take three steps to compute the flow of the fluid. First, the simulation domain will be discretised into a finite number of small volumes known as cells. Next, the fluid properties, flow models, boundary conditions of the domain, and initial conditions for the flow will be % implemented to prescribe the characteristics of the CFD problem. 
\yli{prescribed}. Finally, % through presumed functions (typically zeroth order, so a constant value) for the 
\yli{through} Gauss’s divergence theorem and \yli{assuming a constant} solution in each cell, the governing PDEs can be written in an integral form, discretised by numerical approximation, and solved as a system of algebraic equations through numerical methods. % Therefore, 
The cell characteristics % , including cell volume, face surface area, and face centroid
play a critical role in the discretisation process because they are used to model the flux between a certain cell and its neighbours for the purpose of conservation of mass, momentum, and energy. However, the current GNN methods do not consider this information. 

Another weakness of the current GNN methods is that % the input nodes have very limited information about the objects and its surrounding environment, which partially inherits from the traditional numerical simulators. More clearly, the 
\yli{their} input node features, % used in the current methods
such as signed distance function (SDF), spatial coordinates, and % the
inlet velocity, only provide very limited information about % the objects and the 
\yli{any embedded object and its} surrounding environment to the nodes~\citep{airfrans,cfdgcn}.
Thus, \yli{the nodes} need to collect this information through message-passing between \yli{neighbouring} nodes. In fact, this weakness also appears in the traditional % numerical simulators because the governing PDE equation models the flow locally.
\yli{finite volume simulators, which propagate object boundary condition only locally.}

% To address these weaknesses, in this paper, we propose Shortest Vector (SV), Directional Integrated Distance (DID) and Finite Volume Features (FVF) for enhancing the performance of GNN-based methods. The SV and DID extracted from the mesh provide a global geometry perspective to each input node, therefore removing the need to collect this information through message-passing. FVF are provided to GNN as node and edge attributes such that it can adjust its filters to different nodes. \naheed{To further support the use of FVF, we present a completeness proof showing that the input mesh can be reconstructed from FVF. 
% % Furthermore, we have demonstrated a way to incorporate low-resolution simulation data at all stages of convolution via residual training. This approach has not been presented in earlier works.
% }
% \jess{
% While existing methods~\citep{cfdgcn} exploit low-resolution data as prior knowledge, we demonstrate that residual training reduces the prediction error by helping the model focus more on regions where the low-resolution data tends to be less accurate.
% }

% To summarise, the contributions of this work are as follows:
To address these weaknesses, in this paper, we make the following three contributions regarding three aspects of graph neural network training for CFD:
\begin{itemize}
%[itemsep=-1pt]
    \item \textbf{Input \yli{layer}.}~We propose Shortest Vector (SV) and Directional Integrated Distance (DID) for enhancing the performance of GNN-based methods. The SV and DID extracted from the mesh provide a global geometry perspective to each input node, thus removing the need to collect this information through message-passing and easing the learning.
    \item \textbf{Graph convolution.}~We propose the Finite Volume Features (FVF), including cell volume, face % surface area
    \yli{area normal vector}, and face centroid, to be used as node and edge attributes in graph convolution, such that the convolution filters can be adjusted based on the cell characteristics.
    % We present 
    \yli{A theorem is presented to show} that the input mesh can be reconstructed from the % prescribed
    FVF.
    \item \textbf{Training scheme.}~While existing methods~\citep{cfdgcn} exploit low-resolution data as prior knowledge, we demonstrate that residual training reduces the prediction error by helping the model focus more on regions where the low-resolution data tends to be less accurate. %This approach has not been presented in earlier GNN \yli{studies} for CFD.

\end{itemize}	
The experimental results show that the (i) combined effect of the proposed geometric features and finite volume features %\jess{reduces predictive errors of MeshGraphNet~\citep{deepmind2020}, BSMS-GNN~\citep{bsmsgnn2022}, Chen-GCNN~\citep{chen2021}, Graph U-Net~\citep{airfrans} and CFDGCN~\citep{cfdgcn} by $41\%$, $20\%$, $27\%$, $20\%$, and $24\%$, respectively, and (ii) the additional usage of residual training further reduces the error of CFDGCN by $40\%$}.
\jess{reduces predictive errors of MeshGraphNet~\citep{deepmind2020}, BSMS-GNN~\citep{bsmsgnn2022}, Chen-GCNN~\citep{chen2021} and Graph U-Net~\citep{airfrans} by as much as $41\%$, as well as reduces the predictive error of CFDGCN~\citep{cfdgcn} by about $24\%$, and (ii) additional usage of residual training increases the reduction of the error of CFDGCN to $41\%$}.

% 
% The remainder of this paper is organised as follows. Section~\ref{sec:prelim} first introduces the notations and related % works.
% \yli{studies.} Section~\ref{sec:method} describes the proposed features SV, DID, FVF, and adoption of residual training. 
% \naheed{Section~\ref{sec:FVF} presents the proof that the finite volume mesh can be reconstructed from prescribed FVF.} 
% Section~\ref{sec:expt} reports the experimental results and comparisons. Section~\ref{sec:conc} gives \yli{the conclusion}.
% some conclusive remarks.
\section{Preliminaries and Related Work}
\label{sec:prelim}
\paragraph{Graph construction.}In GNN-based CFD methods, both the inputs and outputs of the model are often graphs. % , and they require representing the 2D CFD simultation 
\yli{The CFD simulation} mesh $M$ \yli{is represented} as a graph $G=(V,E)$, where $V$ and $E$ represent a set of nodes and edges, respectively. There are two methods to do % this.
\yli{so}.

\yli{Using a 2D CFD mesh for illustration,} the first method, as shown in Figure \ref{fig:meshgraph}, is to directly represent the mesh nodes as graph nodes $i\in V$, and the faces between them as edges $(i,j),(j,i)\in E$. We refer to this as the \textit{mesh node-based} method, and it has been used by other researchers~\citep{airfrans,cfdgcn,chen2021,deepmind2020}.
% \begin{figure}[htb!]
% 	\centering
%     \fbox{\includegraphics[width=0.4\linewidth]{figs/Fig 1a.drawio.png}}
% 	\caption{\label{fig:meshgraph} A mesh node-based graph constructed from CFD mesh. (left) the original mesh, (middle) representation of the mesh nodes as graph nodes and mesh faces as graph edges, and (right) the graph used in GNN training.}
% \end{figure}
%
An alternate approach proposed by the authors is the following: the graph node $i\in V$ represents the mesh cell centroid $m_i \in M$, and the bi-directional edge $(i,j),(j,i)\in E$ indicates that cells $i$ and $j$ are adjacent, i.e., share a face. We refer to this approach as the \textit{cell centroid-based} method, \yli{which is illustrated in Figure~\ref{fig:cfdgraph}.} % While not used in other works, 
This method is key to the use of FVF as described in \S~\ref{sec:method}.
%
%Let $M$ be a mesh for CFD simulation, $m_i$ be its cell $i$, and $\bm{x}_i$ be the centroid of $m_i$. The cell centroids are considered as vertices, also called nodes in a graph and the adjacent cells form an edge illustrated in Figure~\ref{fig:cfdgraph}. The graph is denoted as $G=(V,E)$, where $V$ and $E$ represent a set of nodes and edges, respectively.
% \begin{figure}[htb!]
% 	\centering
%     \fbox{\includegraphics[width=0.4\linewidth]{figs/Fig 1.drawio.png}}%{}}
% 	\caption{\label{fig:cfdgraph}The cell centroid-based graph constructed from CFD mesh. (left) the original mesh, (middle) representation of the cell centroids as graph nodes and faces of adjacent cells as edges, and (right) the graph used in GNN training.}
% \end{figure}
Note that in this method, the centroids of the boundary faces are represented as graph nodes as well, despite not being shared by two cells, to capture flow characteristics at the boundary.
% \naheed{Although in this approach, the mesh's boundary faces are not shared by two cells, the face centroids of the faces along the boundary are represented as graph nodes to capture flow characteristics at the boundary}.
This representation allows message-passing from and towards boundary faces to be captured by the edges between these nodes and that of the cell % that abuts the face.} This approach is illustrated in Figure~\ref{fig:cfdgraph}.
\yli{adjacent to the boundary face.}
\eat{
\begin{figure}[htb!]
     \centering
     \begin{subfigure}[b]{0.8\textwidth}
         \centering
         \includegraphics[width=\textwidth]{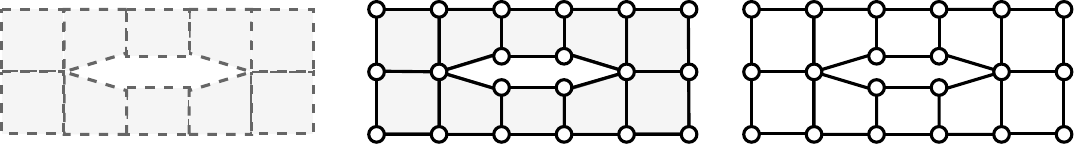}
         \caption{\textbf{Mesh node-based construction.} Graph nodes represent mesh nodes and graph edges represent mesh faces.}
         \label{fig:meshgraph}
     \end{subfigure}
     %\hfill
     \begin{subfigure}[b]{0.8\textwidth}
         \centering
         \includegraphics[width=\textwidth]{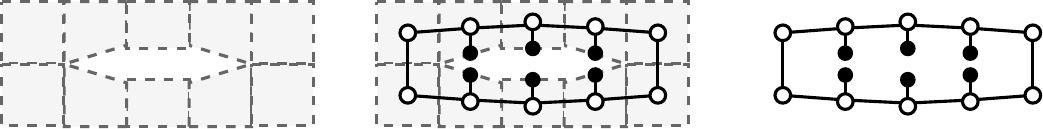}
         \caption{\textbf{Cell centroid-based constuction.} Graph nodes represent either cell centroids (white) or geometry boundary face centroids (black), and graph edges represent either internal faves or geometry boundary faces.}
         \label{fig:cfdgraph}
     \end{subfigure}
        \caption{Two construction methods to obtain the graphs used in GNN training (right) from the original CFD mesh (left).}
        \label{fig:graphcontruction}
\end{figure}
}
\begin{figure}[tb!]
     \centering
     \begin{subfigure}[b]{0.4\textwidth}
         \centering
    \includegraphics[width=1\linewidth]{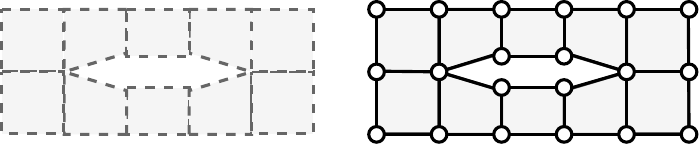}
         \caption{}
         \label{fig:meshgraph}
     \end{subfigure}
     \hfill
     \begin{subfigure}[b]{0.4\textwidth}
         \centering
         \includegraphics[width=\linewidth]{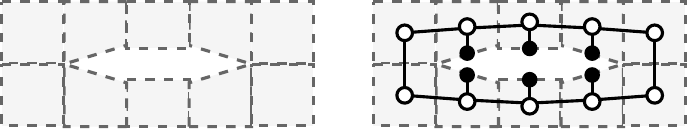}
         \caption{}
         \label{fig:cfdgraph}
     \end{subfigure}
        \caption{Two graph construction methods: (a) \textbf{Mesh node-based:} Graph nodes represent mesh nodes and graph edges represent mesh faces. (b) \textbf{Cell centroid-based:} Graph nodes represent either cell centroids (white) or  boundary face centroids (black), and graph edges represent 
        %either internal faces or boundary faces.
        the adjacency of the cell centroids with one another or with boundary faces.}
    \label{fig:graphcontruction}
\end{figure}

% \begin{figure}[htb!]
% 	\centering
%     \fbox{\includegraphics[width=0.8\linewidth]{figs/Fig 1b.drawio.png}}
% 	\caption{\label{fig:cfdgraph2}The graph constructed from CFD mesh, around a geometry. (left) the original mesh, (middle) connection of adjacent cell centroids in white, and with geometry boundary face centroids in black and (right) the graph used in GNN training.}
% \end{figure}
%
\paragraph{Global geometry representations.}In a steady simulation, a GNN is trained to predict the velocity vector and pressure for each node. To train a GNN to predict target flow characteristics at each node, the current methods encode some features into the input nodes.
The most common approaches include variants of the binary representation~\citep{chen2021} and the Signed Distance Function (SDF)~\citep{airfrans,cfdgcn,guo2016}. 
In binary representations, nodes are given discrete values such as $0$ and $1$, depending on whether they are on the geometry boundary or not.
The SDF, proposed by \citet{guo2016} for CNNs, is defined as
% The most common one is the Signed Distance Function (SDF) proposed by Guo et al.~\cite{guo2016} for CNN. SDF is defined as 
\begin{equation}
SDF(\bm{x}_{i})=\min_{\bm{x}_b\in B}||\bm{x}_{i}-\bm{x}_b||h(\bm{x}_{i}),
\end{equation}
where \yli{$\bm{x}_i$ and $\bm{x}_b$ denote an internal node and its closest boundary node, respectively.}
$h(\bm{x}_{i})$ is equal to $1$, $-1$, and $0$ if $\bm{x}_{i}$ is outside, inside, and on the object boundary, respectively. 
The SDF representation was shown to be more effective than the simple binary representations in the CNN case~\citep{guo2016}.
Since graphs generated from meshes do not have nodes on the inside of the object, the SDF for GNNs is in fact just the shortest Euclidean distance between $\bm{x}_{i}$ and the object, estimated in a discrete case. However, it provides each node with very limited information about the object. The SDF value only indicates the existence of an object at the distance of $SDF(\bm{x}_i)$. No information about the object's size, shape, or direction from the node is given by the SDF, % despite all these affecting the flow at the node.
\yli{even though all these factors will affect the flow at the node.}
%However, it provides very limited information about the object to the node. More clearly, it only indicates to the node that there exists an object at the distance $SDF(\bm{x}_{i})$. The node is given no idea about the object size, shape, and direction. All these factors would affect the flow at the node.
The boundary of the circle in Figure~\ref{fig:geomfeat}(a) indicates the uncertainty due to incomplete information of $SDF(\bm{x}_{i})$. Although some other properties about the nodes and the flow, such as the spatial location of the nodes, inlet, angle of attack, and Mach number~\citep{cfdgcn}, can also be provided as input node features, they cannot substitute the missing geometric information, which remains to be acquired by the nodes through message-passing. % \fixlater{It should be highlighted that this weakness is also true of numerical simulators, because each mesh cell has no global geometry information, and the governing equation models the flow locally.}
\yli{Note that this weakness is also true of typical CFD simulators where each mesh cell carries no global geometry information and the object boundary condition is transmitted only locally.}
\begin{figure}[htb!]
     \centering
     \begin{subfigure}[b]{0.3\columnwidth}
         \centering
         \includegraphics[width=\textwidth]{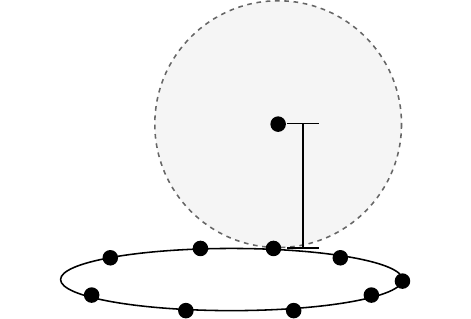}
         \caption{SDF}
         \label{fig:SDF}
     \end{subfigure}
     % \hfill
     \begin{subfigure}[b]{0.3\columnwidth}
         \centering
         \includegraphics[width=\textwidth]{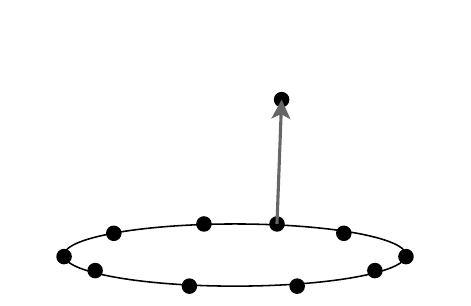}
         \caption{SV}
         \label{fig:SV}
     \end{subfigure}
     % \hfill
     \begin{subfigure}[b]{0.3\columnwidth}
         \centering
        \includegraphics[width=\textwidth]{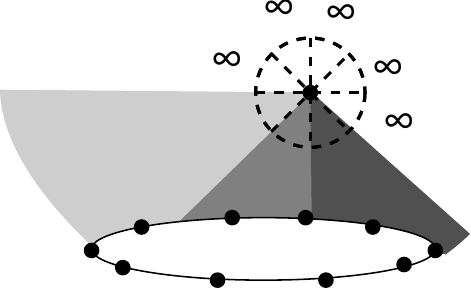}
         \caption{DID}
         \label{fig:DID}
     \end{subfigure}
        \caption{Illustration of three geometry representations\jess{: (a) SDF value only indicates presence of the closest boundary point somewhere along the circle's circumference. (b) SV provides both distance and direction from the nearest boundary point. (c) DID gives the average distance of all boundary within several difference angle ranges.}}
        \label{fig:geomfeat}
\end{figure}
%
% \begin{figure}[htb!]
% \centering
% \fbox{\includegraphics[width=0.7\linewidth]{}}
% \caption{\label{fig:geomfeat}Illustrations of three geometry representations: (a) Signed Distance Function (line segment), (b) Shortest Vector (arrow) and (c) Directional Integrated Distance }
% \end{figure}
%
\paragraph{Prior studies.}The GNN-based CFD methods employ on-shell GNNs and exploit the physical knowledge in different ways. Some of them embed physical constraints and properties into the architectures and objective functions and some others directly use numerical simulators as a part of their methods.  \citet{cfdgcn} combined a differentiable CFD simulator and graph convolution network (GCN) for fluid flow prediction.  \citet{liu2020} exploited graph attention network (GAT) for turbulent flow % prediction. However, they did not use
\yli{prediction without} any physics prior.  \citet{ogoke2020graph} applied GraphSAGE~\citep{graphsage} to predict drag forces around airfoils of different shapes and angles of attack under laminar flow. They demonstrated that GraphSAGE outperforms non-graph based % methods: MLP and CNN
\yli{MLP and CNN methods}.  \citet{battaglia2018relational} proposed a general method called graph network (GN) blocks that can handle graphs with local features such as node features, edge features, and global graph-level features. 
% A GN block first passes messages from nodes to edges through an edge convolution kernel before updating the edge features. The updated edge features are then aggregated to the nodes through sum or other permutation invariant operations as the edge messages. A node convolution kernel then takes the old node features and edge messages to obtain the updated node features. \naheed{Finally, the updated node features and edge features are aggregated to compute global graph-level feature.} 
\citet{sanchez2020learning} used GN blocks to predict the future roll-out of physical systems of particles, including fluids, rigid solids, and deformable materials. \citet{deepmind2020} used GNN to predict future roll-outs of unsteady flow and showed better performance than CNN on various simulation scenarios.
More recently, \citet{airfrans} released a large-scale high-resolution two-dimensional (2D) Reynolds-averaged Navier-Stokes (RANS) simulation datasets on airfoils and demonstrated good generalisation capabilities of GraphSAGE~\citep{graphsage} and Graph U-Net~\citep{gunet} to different physical conditions % as well as to different
\yli{and} airfoil geometries. 
% However, these methods do not exploit any cell characteristics as features. 
% Instead of solving PDEs directly, Kochkov et al.~\citep{kochkov2021} study the correction between high- (DNS) and low-resolution simulations and showcase the GNN's capability of achieving excellent accuracy with coarser mesh while promising substantial computational cost reduction. Their works, however, require structured grids and large datasets input (high resolution simulation). 
DiscretizationNet~\citep{ranade2021discretizationnet} used finite volume discretisation to approximate spatio-temporal partial derivatives in its CNN encoder-decoder training. It is a non-data driven \yli{method} that cannot generalise to new scenarios, requiring it to be trained on every new instance to obtain a solution. Also, its convolutions did not leverage cell characteristics as the finite volume method does. \citet{chen2021} proposed their own permutation-invariant edge-convolution layer and smoothing layer to predict laminar flow on obstacles of different shapes. Their proposed architecture, which is referred to as \textit{Chen-GCNN} in this paper, showed better performance than the standard U-net model~\citep{unet2015}. All these GNN works neglect the cell characteristics, which, % are heavily used in industrial simulators based on finite volume method. More clearly
in the finite volume method, % these characteristics
are used to discretise the governing equations and model the flow among the cells. 
\section{Proposed Features and Training Scheme}
%\section{Global Geometry Perspective, Finite Volume Features \& Residual Training}
\label{sec:method}
% \subsection{SV and DID}
\subsection{Global Geometry Perspective}
To give each node global geometric perspective about the object and its surrounding environment, we propose two features: Shortest Vector (SV), and Directional Integrated Distance (DID). SV is the vector formed by
an internal cell centroid node $\bm{x}_{i}$ and its closest boundary face node $\bm{x}_{b}$ (Figure~\ref{fig:SV}), and
is defined as 
\begin{equation}
\label{eqn:sv}
\phi(\bm{x}_{i}) = \bm{x}_{i}-\bm{x}_b\text{ s.t. }\min_{\bm{x}_b\in B}||\bm{x}_{i}-\bm{x}_b|| \;.
\end{equation}
SV is related to SDF in that the length of SV, $||\phi\left(\bm{x}_{i})||= SDF(\bm{x}_{i}\right)$. However, SV has more discriminative power than SDF because two nodes, $\bm{x}_{i}$ and $\bm{x}_{j}$, can have the same SDF values (i.e., $SDF(\bm{x}_{i} )=SDF(\bm{x}_{j})$) despite the vectors $\phi(\bm{x}_{i})$ and $\phi(\bm{x}_{j})$ not being the same. %Equation~\ref{eqn:sv} can be trivially generalised to 3D case.

One limitation of SV is that only the closest node on the object's boundary % $\bm{x}_{i}$
is represented. Information in other directions is still missing. DID defined on an angular range (Figure~\ref{fig:DID} and Figure~\ref{fig:did}) is a generalisation of SV. DID has different angular segments to handle different directions. Figure~\ref{fig:did} shows four overlapping angular segments, each with a range of $2\pi/3$. To clearly present the concept of DID in the following description, we consider a continuous object boundary. % (such as an airfoil's) represented by a parametric equation like the one in Figure~\ref{fig:fvf}(a).
For the $j^{th}$ segment of DID, $\theta_j$ and $\theta'_{j}$ represent the starting and ending angles of the segment, respectively, where $\theta_j\leq\theta'_{j}$. The continuous version of DID is defined as 
\begin{equation}
DID(\bm{x}_{i},\theta_j,\theta'_{j},B_c)=\int_{\theta_j}^{\theta'_{j}}w_j(\theta)g(\bm{x}_{i},B_c,\theta)d\theta \;,
\end{equation}
where $w_j(\theta)$ is a suitable weightage function such as a Gaussian function centred at $(\theta_j+\theta'_{j})/2$, $B_c$ is the continuous boundary of the object, \jess{and $g(\bm{x}_{i},B_c,\theta)$ is the %shortest Euclidean
distance between the object and a node $\bm{x}_{i}$ in the direction $\theta$.} If there is no object boundary at the direction $\theta$, an appropriately large constant, as denoted in Figure \ref{fig:DID} as $\infty$, is given to $g(\bm{x}_{i},B_c,\theta)$.
In this continuous version of DID, $B_c$ is represented by a parametric equation rather than a set of nodes.
Each DID value provides a weighted average distance between the object and the node in a particular angle range.
% However, note that in the implementation, numerical integration is performed. % For 3D case, the integration can be performed on convex polyhedral cones. 
% \jess{We use the DID with the angles $\theta=[-\frac{\pi}{4}, 0, \frac{\pi}{4}, \frac{\pi}{2}, \frac{3\pi}{4}, \pi, \frac{5\pi}{4}, \frac{3\pi}{2}, \frac{7\pi}{4}, 2\pi]$, and a uniform distribution as functions $w_j(\theta)$.}
While shorter angular segments make a more accurate description of the object, it increases the number of input features to train the network with.
In this work, we use SV and DID to provide \naheed{relative} global geometry information of the object and its environment to the input nodes. Hence, this information will not have to be collected during propagation, easing the learning process.
The algorithmic details of our discrete, non-parametric DID implementation can be found in the appendix.
% For the case without parametric function, DID can be implemented by numerical integration. The algorithmic details can be found in the appendix
% 
\subsection{Finite Volume Graph Convolution} 
\label{subsec:fvf}
% \naheed{(@Wai Lee @Lim, please add more words and equations to explain how finite volume method use the cell volume, face, etc. We need a high level description of how FV method uses the FVF we are using to highlight the significance of these features, justifying their usage in this paper.)
% }
\yli{
Consider the integral form of the steady state, incompressible turbulent Navier--Stokes equation for $x$ direction over a control volume,
which is given by
\begin{equation} \label{eqn:nse}
\begin{aligned}
    &\int \nabla\cdot\left(\bar{u}_x\overline{\bm{U}}-(\nu+\nu_t)\nabla\bar{u}_x\right)
    + \frac{\partial\bar{p}}{\partial{x}} \mathrm{d}\text{V} \\
     &=  \oint \left(\bar{u}_x\overline{\bm{U}} - (\nu+\nu_t)\nabla\bar{u}_x\right)\cdot\mathrm{d}\bm{S} + \int \frac{\partial\bar{p}}{\partial{x}} \mathrm{d}\text{V} = 0 \;,
\end{aligned}
\end{equation}
where $\overline{\bm{U}}=[\bar{u}_x,\bar{u}_y]^T$ is the velocity vector,
$\nu$ and $\nu_t$ are the dynamic and turbulent viscosities respectively, and 
$\bar{p}$ is the \naheed{normalised} pressure. % ~\citep{cfdfvbook}.
The second expression of Equation~\ref{eqn:nse} is obtained from Gauss's theorem, 
where $\textbf{S}$ denotes the area normal vector on the surface of the control volume and 
points outwards by convention~\citep{cfdfvbook}. 
The finite volume method \naheed{discretises} the control volume into cells, % each with volume $\text{V}$, 
as shown in Figure~\ref{fig:fvf}(a). 
For sufficiently small cell volume $\text{V}$ like in Figure~\ref{fig:fvf}(b), % all variables are approximately constant within a cell and along each of its \naheed{faces}, 
all variables within a cell or along each of its faces are approximately constant~\citep{cfdfvbook}, 
so Equation~\ref{eqn:nse} becomes
\begin{equation}\label{eqn:dnse}
\begin{aligned}
    &\sum_{f}\left(\bar{u}_x\overline{\bm{U}}-(\nu+\nu_t)\nabla\bar{u}_x\right)_f\cdot\bm{S}_f + \left(\frac{\partial\bar{p}}{\partial{x}}\right)\text{V} \\ 
    &= \sum_{f}\bm{\Phi}_f\cdot\bm{S}_f + \Omega\text{V} = 0 \;,
\end{aligned}
\end{equation}
where $f$ is a counter for the discrete faces of the cell. 
The first and second terms of the first expression of Equation~\ref{eqn:dnse} are known as the flux ($\bm{\Phi}$) and source ($\Omega$) terms, respectively~\citep{cfdfvbook}, 
thus constituting the second expression.

In typical finite volume simulators, variables like $\overline{\bm{U}}$ and $\bar{p}$ are solved at the cell centroids, 
such as $\bm{x}_i$ and $\bm{x}_j$ in Figure~\ref{fig:fvf}(b). 
However, from Equation~\ref{eqn:dnse}, the flux term needs face centroid values, 
which can be approximately interpolated from the cell centroids by~\citep{cfdfvpaper} 
\begin{equation}
 \label{eqn:interpl}
 \bm{\Phi}_f \approx \left(\bm{\Phi}_{i}\frac{||\bm{c}_{f,ij}-\bm{x}_i||}{||\bm{x}_{j}-\bm{x}_i||} + \bm{\Phi}_{j}\frac{||\bm{c}_{f,ij}-\bm{x}_j||}{||\bm{x}_{j}-\bm{x}_i||}\right) \;.
\end{equation}
The interpolated flux term will then be projected to its respective face area normal vector, $\bm{S}_{f,ij}$.
Note that Equation~\ref{eqn:interpl} implies that the face centroid at $\bm{c}_{f,ij}$ lies along the line connecting $\bm{x}_i$ and $\bm{x}_j$, 
% It should be highlighted that
which is not necessarily true because cells in a CFD simulation need not be regular and can have different sizes and shapes, as shown in Figure~\ref{fig:fvf}(b). 
In general, cells with small volumes are used in sensitive regions,
for instance the close vicinity of an object or wake region where flow variables may change drastically due to boundary conditions. 
Therefore, Equation~\ref{eqn:interpl} will incur \naheed{an error} that corresponds to the deviation from the face centroid, 
though a spatial correction scheme can be implemented as mitigation~\citep{cfdfvpaper}. 
Also, the error will reduce with smaller cell volume, which is the main reason for the high computational cost of CFD.
On the other hand, the source term weighted by the cell volume will be represented exactly by $\Omega_j\text{V}_j$.
% Figure~\ref{fig:fvf}(a) shows a 2D CFD mesh with different sizes and shapes of cells. The close-up view in Figure~\ref{fig:fvf}(b) presents the nomenclature of cell characteristics: (i) cell volume, $\text{V}$, (ii) cell centroid coordinate, $\bm{x}$, (iii) face centroid coordinate, $\bm{c}_{f}$, and (iv) face area normal vector, $\textbf{S}_{f}$. 
}
\begin{figure}[tb!]
\centering 
\includegraphics[width=\linewidth]{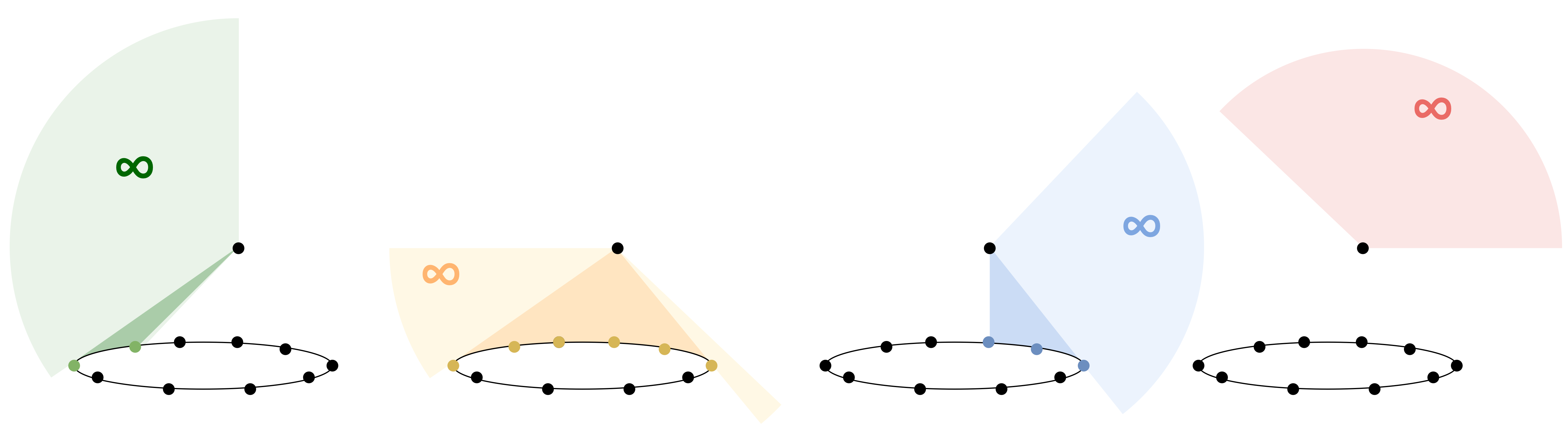}
\caption{\jess{Illustration of the computation of DID. Each angle range is represented by the weighted average of the distance to the boundary (shown by the darker segments) and the $\infty$ value for the directions where there is no boundary (shown by the lighter segments). Note that $\infty$ is a predefined number in the implementation.}}
\label{fig:did} 
\end{figure}
\begin{figure}[htb!]
	\centering
	\includegraphics[width=\linewidth,keepaspectratio]{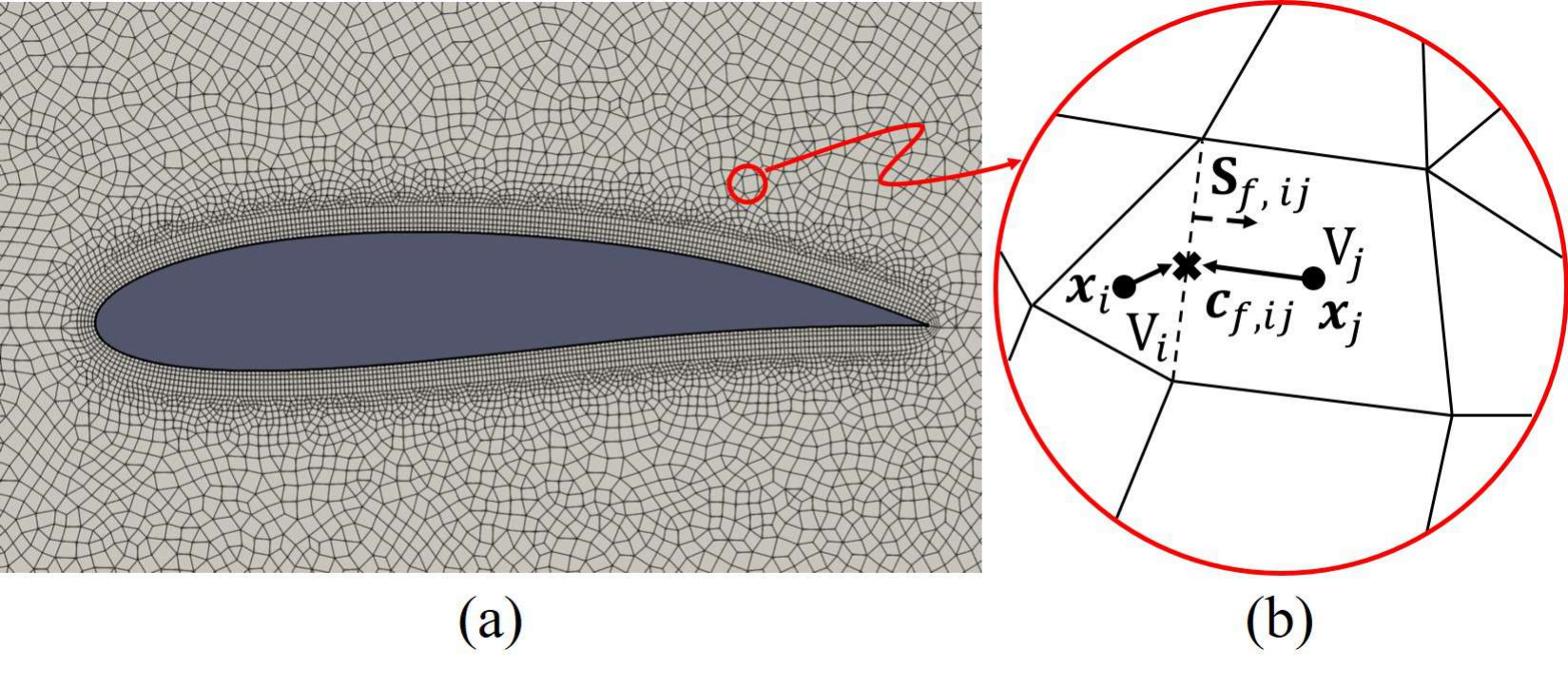} %{}
	\caption{(a) CFD mesh with \yli{an airfoil body surrounded by} different sizes of cells.
     (b) Illustration of \yli{cell characteristics, namely cell centroids, $\bm{x}_i$ and $\bm{x}_j$, face centroid, $\bm{c}_{f,ij}$, 
     face area normal vector, $\bm{S}_{f,ij}$, and cell volumes, $\text{V}_i$ and $\text{V}_j$.}}
 % finite volume features. The cell centroid of a finite volume, $\text{V}$, is denoted by a point. Shared face between cell $i$ and $j$ with face centroid $\bm{c}_{f,ij}$ and its face area normal vector $\textbf{S}_{f,ij}$ are denoted by a cross and a dashed arrow, respectively. The vectors $\bm{c}_{f,ij} - \bm{x}_i$  and $\bm{c}_{f,ij}-\bm{x}_j$, given by solid arrows, represent the position of cell-centers $\bm{x}_i$ and $\bm{x}_j$ relative to face centroid $\bm{c}_{f,ij}$.}
\label{fig:fvf}
\end{figure}

As shown in Equations~\ref{eqn:dnse}--\ref{eqn:interpl}, the finite volume method uses cell characteristics, such as the cell and face centroids, face area normal vector, and cell volume, extensively. Motivated by the finite volume method, we hence embed these characteristics in GNN.
\yli{
% Motivated by the finite volume method, which, as shown in equations~\ref{eqn:dnse}--\ref{eqn:interpl}, uses the cell characteristics extensively, 
% including cell and face centroids, face area normal vector, and cell volume, we embedded these characteristics in GNN. 
To ease the following discussion, we will take the spatial graph convolution network (SGCN) as an example.}
% \yli{the finite volume cell characteristics, as presented in Figure~\ref{fig:fvf}(b),} we \naheed{generalise} the spatial graph convolution network (SGCN).
First introduced and evaluated on graphs from chemical compounds~\citep{danel2020}, the SGCN \yli{has} two key features: 
\yli{(i)}  use of the spatial location of graph nodes as node attributes and 
\yli{(ii)} use of multiple filters.
%the convolution of SGCN is defined as
% \begin{equation}
% \label{eqn:sgcn}
% \Bar{h}_i(U,b,N_i)=\sum_{j\in N_i}ReLU(U^T(\bm{x}_j-\bm{x}_{i})+b)\odot h_j,
% \end{equation}
One limitation is that the node attributes only consider node positions and ignore other useful information.
% one can concatenate other node attributes (such as the cell volumes) to the node positions. However, \jess{the relative cell volume ($\text{V}_j-\text{V}_i$) does not represent any meaningful information.}
Hence, we \naheed{generalise} the convolution to be able to take \yli{in} node attributes, $p_j$, % as well as
\yli{and} edge attributes, $q_{ij}$.
Given node feature $h_i\in \mathcal{R}^{d_{in}}$ at the $i^{th}$ node and its spatial location $\bm{x}_i\in \mathbb{R}^t$, a convolution using FVF is defined as
\begin{equation}
    \label{eqn:fvgcn}
	\Bar{h}_i(U,b,N_i)=\sum_{j\in N_i}ReLU(U^T(q_{ij})+b)\odot (h_j\oplus p_j) \;,
\end{equation}
where $U\in\mathbb{R}^{3t\times (d_{in}+1)}$ and $b\in\mathbb{R}^{d_{in}+1}$ are trainable parameters,
\yli{$t$ is the spatial dimension of the CFD simulation, and} $N_i$ is an index set indicating the \yli{neighbourhood} of the node $i$ and $\odot$ is the element-wise multiplication~\citep{danel2020}.
%For the chemical compounds in Danel et al.’s work, $t=3$.
%Note that the node attributes are now being concatenated to the node features.
\naheed{In our model, the % attributes of a node
\yli{node attributes} are its associated cell volume, denoted as $p_j=\text{V}_j \in\mathbb{R}^{1}$,} and the edge attributes are its associated face area normal vector and the relative spatial location of its face centroid to the nodes, denoted as $q_{ij}= \textbf{S}_{f,ij} \oplus (\bm{c}_{f,ij} - \bm{x}_{i}) \oplus (\bm{c}_{f,ij} - \bm{x}_j) \in \mathbb{R}^{3t}$.
% The complete equation is hence
% \begin{multline}
% 	\Bar{h}_i(U,b,N_i)=\sum_{j\in N_i}ReLU(U^T((\textbf{S}_{f,ij} \oplus (\bm{c}_{f,ij} - \bm{x}_{i}) \oplus (\bm{c}_{f,ij} - \bm{x}_j))+b)\odot (h_j \oplus \text{V}_j) \;.
% \end{multline}
% The cell characteristics, including cell volume, face area normal vectors, and face centroids,
% \yli{All the cell characteristics} are used in every convolution. % For 2D CFD simulation,  $t=2$.
Note that cell centroids $\bm{x}_{i}$ and $\bm{x}_j$ are used as reference points when we use face centroid, % $c_{ij}$
\yli{$\bm{c}_{f,ij}$}, as with the finite volume method.
%The outputs of $\Bar{h}_i(U,b,N_i)$ from different filters are also concatenated as in Eq. \ref{eqn:concatenation}.
%and form
%\[
%\hat{f}_i(\theta,N_i,k)=f_i(U_1,b_1,N_i)\oplus\cdots\oplus f_i(U_k,b_k,N_i).
%\]
Finally, as with CNN operations, multiple filters are used and their outputs are concatenated such as
\begin{equation}
    \label{eqn:concatenation}
    \hat{h}_i(\theta,N_i,k)=\Bar{h}_i(U_1,b_1,N_i)\oplus\cdots\oplus\Bar{h}_i(U_k,b_k,N_i) \;,
\end{equation}
where $\theta=\{U_1,\cdots,U_k,b_1,\cdots,b_k\}$ are trainable parameters and $\oplus$ represents an operator of concatenation. Finally, an MLP is applied on $\hat{h}_i$ and the final output of node $i$ is obtained, whose dimension is the same as the output dimension of the MLP.

When used directly, we refer to this \yli{method} as \textit{Finite Volume Graph Convolution (FVGC)} and $\text{V}_i$, $\textbf{S}_{f,ij}$, $(\bm{c}_{f,ij}-\bm{x}_{i})$, $(\bm{c}_{f,ij} - \bm{x}_j)$
%\yli{$\text{V}_i$, $\textbf{S}_{f,ij}$, $\bm{c}_{f,ij}$, $\bm{x}_{i}$, $\bm{x}_j$, and $\text{V}_j$}
as \textit{Finite Volume Features} (FVF). % Alternatively,
Alternatively, the same principles can be incorporated into other graph convolution types indirectly, for instance by using the FVGC as the aggregation function of the SAGE convolution. In convolutions like the invariant edge convolution~\citep{chen2021}, %which already employ multiple filters, just the use of node and edge attributes as in equation \ref{eqn:fvgcn} has to be implemented.
\jess{which already employ multiple filters and edge features, just the use of FVF as node and edge attributes in each convolution has to be implemented.}

Any common 2D mesh typically used for CFD simulations, such as those with triangular or quadrilateral cells, can be reconstructed from its prescribed FVF. More specifically, 
\begin{theorem} [The Completeness of FVF]
    Let $M$ be a 2D mesh such that the cells along its farfield have no more than $2$ boundary faces each, and the faces of each of its cells enclose a singular volume.
    If a graph $G=(V,E)$ is the cell centroid-based graph representation of $M$, the relative positions of all the nodes and faces of $M$ can be uniquely deduced given {\normalfont $\textrm{V}_i$, } $\forall i\in V$, and $\textbf{S}_{f,ij}$, $(\bm{c}_{f,ij}-\bm{x}_{i})$ and $(\bm{c}_{f,ij} - \bm{x}_j)$, $\forall(i,j)\in E$.
    \label{theorem1}
\end{theorem}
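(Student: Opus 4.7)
The plan is to reconstruct $M$ by sequentially recovering cell centroids, then face centroids, then faces, and finally cells, from the given features. Since only relative positions are asked for, I would fix one cell centroid at the origin. The three vector features attached to each edge, together with the 2D hypothesis, are exactly what this reconstruction needs.

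First, for every internal edge $(i,j) \in E$ between two cell-centroid nodes, subtracting the two given displacement vectors gives
\begin{equation*}
\bm{x}_j - \bm{x}_i \;=\; (\bm{c}_{f,ij} - \bm{x}_i) - (\bm{c}_{f,ij} - \bm{x}_j) \;.
\end{equation*}
Because $M$ encloses a connected domain, its cell-centroid subgraph is connected, so propagating this relation along a spanning tree determines $\bm{x}_i$ for every cell. Every face centroid follows from $\bm{c}_{f,ij} = \bm{x}_i + (\bm{c}_{f,ij} - \bm{x}_i)$; for a boundary-face-centroid node $j$, the cell-centroid-based construction identifies $\bm{x}_j$ with the boundary face centroid itself, recovered by the same formula along the edge to its adjacent cell.

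Second, because $M$ is two-dimensional, each face is a straight line segment whose midpoint is $\bm{c}_{f,ij}$, whose length equals $|\textbf{S}_{f,ij}|$, and whose direction is orthogonal to $\textbf{S}_{f,ij}$. Writing $\bm{R}_{\pi/2}$ for a $90^\circ$ rotation, the endpoints $\bm{c}_{f,ij} \pm \tfrac{1}{2}\,\bm{R}_{\pi/2}\textbf{S}_{f,ij}$ are determined (both sign choices label the same segment), so the face is unique; internal and boundary faces are handled identically. Once all faces are built, for each cell $i$ the edges incident to $i$ enumerate exactly its bounding faces, and by hypothesis these enclose a singular volume, fixing the cell as the region they bound. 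The cell volumes $\textrm{V}_i$ are strictly redundant for this reconstruction and serve only as a consistency check via the shoelace formula.

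The main obstacle I anticipate is making the boundary stitching fully airtight. Inside the mesh the argument is mechanical, but for a farfield cell one must show its boundary-face segments and internal-face segments join into a single simple closed polygon. This is precisely where the ``no more than two boundary faces per farfield cell'' hypothesis is essential: it rules out a single cell abutting the farfield in several disconnected stretches, a configuration that would admit topologically distinct closures sharing the same FVF. The ``singular enclosed volume'' hypothesis similarly excludes self-intersecting face arrangements. Together these pin down the boundary topology and therefore the overall mesh uniquely.
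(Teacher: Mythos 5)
Your opening steps match the paper's: you recover the relative positions of all cell and face centroids by differencing the two given displacement vectors along each edge and propagating over the connected graph, and you rebuild each face as the segment with midpoint $\bm{c}_{f,ij}$ and endpoints $\bm{c}_{f,ij}\pm\tfrac{1}{2}\bm{R}_{\pi/2}\textbf{S}_{f,ij}$ (this is exactly the paper's Lemma on face reconstruction). The gap is in how you treat the farfield. You assert that ``internal and boundary faces are handled identically'' and that ``for each cell $i$ the edges incident to $i$ enumerate exactly its bounding faces.'' Neither holds: in the cell-centroid-based construction only internal faces and \emph{geometry} boundary faces correspond to edges of $G$, so the farfield boundary faces carry no $\textbf{S}_{f,ij}$ and no $\bm{c}_{f,ij}$ at all and cannot be rebuilt by your segment formula. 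Consequently the mesh nodes that touch no internal face (e.g.\ farfield corner nodes) are never located by your argument, and the farfield faces are never constructed.

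This is also why you misread the role of the hypothesis that farfield cells have at most $2$ boundary faces. It is not a topological disambiguation device; it is what guarantees that each farfield cell has at most \emph{one} mesh node unaccounted for after the internal faces are processed, so that the single remaining unknown can be solved from the cell-centroid relation $\tfrac{1}{n}\bigl(\bm{a}+\sum_{j=1}^{n-1}\bm{b}_j\bigr)=\bm{x}_i$, i.e.\ $\bm{a}=n\bm{x}_i-\sum_{j=1}^{n-1}\bm{b}_j$ (the paper's second and third lemmas, the latter deciding whether such a missing node exists at all). Only after these corner nodes are recovered does the ``singular enclosed volume'' assumption fix the farfield faces joining them. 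To repair your proof you would need to add this centroid-averaging step for the farfield cells; without it the reconstruction is incomplete exactly on the part of the mesh the theorem's hypotheses are designed to handle.
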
%\vspace{-0.5cm}
The implementation details of the FVF and proof of this theorem are provided in the appendix.

\subsection{Residual Training}
\label{sec:res}
Residual training is a well-known approach in image super-resolution~\citep{zhang2018residual,yang2019deep}. The general idea is to train the network to predict the residual field $F - Upsample(F_{LR})$, \yli{where $F_{LR}$ is the low-resolution field}, instead of \yli{the original field} $F$ itself.
\yli{However,} the most common way of utilizing low-resolution data as reference in CFD--AI literature is to concatenate $ Upsample(F_{LR})$ to one of the intermediate convolution layers~\citep{cfdgcn}.
The prior knowledge from the low-resolution flow field could be further utilised through the residual training \yli{scheme}.
Instead of \yli{minimising} the loss 
$\mathcal{L}\left(F_{GT},\widehat{F}\right)$, 
where $\widehat{F}$ is the predicted flow variable, $F_{GT}$ is the corresponding ground truth, and $\mathcal{L}$ is an arbitrary loss criterion, 
the network minimises the residual loss $\mathcal{L}\left(F_{GT},\widehat{F}_r + Upsample(F_{LR})\right)$. Here, $\widehat{F}_r$ is the predicted residual and the prediction is $\widehat{F} = \widehat{F}_r + Upsample(F_{LR})$. Since the low-resolution field is an approximation of the ground truth, much of the residual field will be close to zero. Thus, training the model to predict the residual field eases the learning, and helps the model to focus on \yli{the} more nuanced areas where the low-resolution fields tend to be inaccurate.

\jess{
% Discuss the difference from learned corrector approach of Kochkov et al.
A similar method, referred to as "learned correction", was demonstrated by \citet{kochkov2021}. However, it depends on the introduction of the low-resolution field as a new input to an the neural network in an originally non-super-resolution case. In the next section, this work intends to support the addition of the low-resolution field to the network output as a key step in improving an already super-resolution case.
}
%
% Instead of minimising the function $f_{GT}- \hat{f}$ ($\hat{f}$ is the predicted flow variable), the network minimises the following residual-loss function: 
% \begin{equation}
% \mathcal{L}_r = || f_{GT} - (\hat{f}_r + Upsample(f_{LR})) ||
% \end{equation}
% Here, $\hat{f}_r$ is the predicted residual, and the prediction $f = \hat{f}_r + Upsample(f_{LR})$, and $\mathcal{L}$ is an arbitrary loss criterion.
%
%\jess{NOTE: You can show the GT and GT residual and say that a lot of values in GT residual is almost zero. Thus, it would easy the learning a lot. In addition, it will highlight which part accurately need to learn. }
%
%
%

\eat{
\section{Relation Between a CFD Mesh and its FVF}
\label{sec:FVF}

% \paragraph{The Completeness of FVF.}
In this section, we show that any common 2D mesh can be reconstructed
from the FVF. Specifically, assuming that the 2D mesh cells along the farfield boundary have no more than $2$ boundary faces each and that the faces of each cell would enclose a singular volume,
%the farfield boundary faces form the convex hull of all the mesh nodes,
the relative positions of all the nodes and the faces of a mesh can be uniquely deduced given the face area normal vectors and the relative positions of the face centroids and the cell centroids, $\textbf{S}_{f,ij}$, $(\bm{c}_{f,ij}-\bm{x}_{i})$ and $(\bm{c}_{f,ij} - \bm{x}_j)$, for $(i,j)\in E$.
%In this way, we emphasise the relationship between a CFD mesh and its finite volume features.

\begin{lemma}
    Given the (relative) position of the face centroid $\bm{c}_{f,ij}$ and face area normal vector $\textbf{S}_{f,ij} = (S_x,S_y)$ of a face in a 2D mesh, the unique (relative) positions of the mesh nodes $\bm{a}, \bm{b}$ of the face, and the existence of the face, can be deduced.
    \label{lemma1}
    \eat{
    \begin{proof}
    From the (relative) position of the face centroid $\bm{c}_{f,ij}$, we know that $(\bm{a}+\bm{b})/2=\bm{c}_{f,ij}$. Likewise, we can assume without loss of generality that $(\bm{b}-\bm{a})$ is the face area normal vector $\textbf{S}_{f,ij}$ rotated by a right angle clockwise, or $(\bm{b}-\bm{a}) = (S_y,-S_x)$. The unique (relative) positions of the mesh nodes $\bm{a}, \bm{b}$ can be solved from this system of two linear equations. The existence of a face shared by them is obvious.
    \end{proof}
    }
\end{lemma}
% \begin{figure}[htb!]
% \centering 
% \includegraphics[width=0.65\linewidth]{}
% \label{fig:lem1}
% \caption{Obtaining the mesh nodes and face from the face centroid and face area normal vector}
% \end{figure}
\begin{lemma}
    Given the (relative) position of a cell's centroid $\bm{x}_{i}$, the unknown (relative) position of one of its mesh nodes $\bm{a}$ can be deduced from the (relative) positions of its other mesh nodes $\bm{b}_1,\dots,\bm{b}_{n-1}$, where $n$ is the total number of mesh nodes associated to the cell.
    %Likewise, the cell's faces can also be deduced with the assumption that they should form a singular enclosed volume around the cell centroid.
    \label{lemma2}
    \eat{
    \begin{proof}
    From the (relative) position of the cell centroid $\bm{x}_{i}$, we know that $\frac{1}{n}\left(\bm{a}+\sum_{j=1}^{n-1} \bm{b}_j\right)=\bm{x}_{i}$. The unique (relative) position \naheed{of} at least one mesh node $\bm{a}$ can be found to be $n\bm{x}_i-\sum_{j=1}^{n-1} \bm{b}_j$.
    %With the assumption that the faces of a cell form an enclosed volume, the unique faces of the cell becomes obvious from given all its mesh nodes.
    \end{proof}
    }
\end{lemma}%\vspace{-0.5cm}
\begin{lemma}
    If and only if $\bm{x}_{i}$ is the (relative) position of a cell centroid with the mesh nodes of (relative) positions
    $\bm{b}_1,\dots,\bm{b}_{n-1}$, then $\bm{x}_i=n\bm{x}_i-\sum_{j=1}^{n-1} \bm{b}_j$.
    \label{lemma3}
    \eat{
    \begin{proof}
    \[\frac{1}{n-1}\sum_{j=1}^{n-1} \bm{b}_j=\bm{x}_i\Longleftrightarrow\sum_{j=1}^{n-1} \bm{b}_j=(n-1)\bm{x}_i\Longleftrightarrow\sum_{j=1}^{n-1} \bm{b}_j=n\bm{x}_i-\bm{x}_i\Longleftrightarrow \bm{x}_i=n\bm{x}_i-\sum_{j=1}^{n-1} \bm{b}_j\]
    \end{proof}
    }
\end{lemma}%\vspace{-0.5cm}
% \begin{figure}[htb!]
%      \centering
%      \begin{subfigure}[b]{0.45\textwidth}
%          \centering
%     \includegraphics[width=\textwidth]{}
%          \caption{\small Obtaining the cell's one unknown mesh node from its cell centroid and other mesh nodes.}
%          %\label{fig:meshgraph}
%      \end{subfigure}
%      \hfill
%      \begin{subfigure}[b]{0.45\textwidth}
%          \centering
%          \includegraphics[width=\textwidth]{}
%          \caption{\small Determining that a cell has no other unknown mesh nodes from its cell centroid and mesh nodes.}
%          %\label{fig:cfdgraph}
%      \end{subfigure}
%         \caption{}
%         \label{fig:lem23}
% \end{figure}
\begin{theorem} [The Completeness of FVF]
    Assuming that the mesh cells along the farfield boundary have no more than 2 boundary faces each and that each cell's faces would enclose a singular volume,
    % and that the farfield boundary faces form the convex hull of all the mesh nodes, 
    the relative positions of all the nodes and faces of a mesh can be uniquely deduced given the FVF of the mesh.
    \label{theorem1}
\end{theorem}%\vspace{-0.5cm}
\eat{
\begin{proof}
% To recall, each cell of the original mesh is represented by a graph node $i\in V$. Likewise, each internal face of the mesh, as well as each boundary face along the surface of the geometry, is represented by an edge $(i,j)\in E$.
By the cell-centroid based graph construction method (section~\ref{sec:prelim}), each cell of the input mesh corresponds to a node $i\in V$, and each internal face and geometry boundary face of the input mesh corresponds to an edge $(i, j) \in E$.

The relative position of these face centroids $\bm{c}_{f,ij}$
to their adjacent cell centres $\bm{x}_{i}, \bm{x}_j$
are given in the edge attributes of the FVF as the vectors $(\bm{c}_{f,ij}-\bm{x}_{i})$ and $(\bm{c}_{f,ij} - \bm{x}_j)$. It is obvious that the positions of all face centroids and cell centres relative to one another in a connected mesh can be deduced from this. The face area vector $\textbf{S}_{f,ij}$ is also included as an edge attribute. Hence, from lemma \ref{lemma1}, all internal faces, geometry adjacent boundary faces, and their respective mesh nodes can be found, shown in \ref{fig:theorem1a}. This already accounts for the significant majority of the original mesh. All that remains are the edges along the farfield boundary, as well as the relative position of any mesh nodes that did not abut an internal face (such as those at corners).

As mentioned before, the relative positions of the cell centroids $\bm{x}_i$ are present in the edge attributes.
%Thus, by lemma \ref{lemma2}, these remaining mesh nodes' positions can be found, 
Assuming that these cells do not have more than $2$ boundary faces along the farfield each, and therefore only $1$ mesh node not associated with an internal edge, the unknown node's position can be found as $n\bm{x}_i-\sum_{j=1}^{n-1} \bm{b}_j$ where $b_1,\dots,b_{n-1}$ are the known mesh positions, as in \ref{lemma2}. This is a fair assumption, considering the typical meshes with triangular or quadrilateral cells, commonly used for 2D CFD simulations. Alternatively, if $n\bm{x}_i-\sum_{j=1}^{n-1} \bm{b}_j$ is equal to the cell centroid as in \ref{lemma3}, it can be concluded that $\{b_1,\dots,b_{n-1}\}$ is the complete set of mesh nodes. %This is illustrated in Figure \ref{fig:lem23}
% Note that it is possible that the unknown positions of even more mesh nodes can be found, considering that the cell volume $\text{V}_i$—also a function of the cell's mesh nodes' positions—is given in the node attributes. However, it is a fair assumption that the number of unknown nodes will not exceed $1$ considering the typical meshes with triangular or quadrilateral cells, commonly used for 2D CFD simulations.
Finally, the unique faces connecting these mesh nodes along the farfield can be deduced from the assumption that the faces of each cell along the farfield should enclose a singular volume. This is illustrated in Figure \ref{fig:theorem1b}.
\end{proof}
\begin{figure}[htb!]
     \centering
     \begin{subfigure}[b]{0.5\textwidth}
         \centering
         \includegraphics[width=\textwidth]{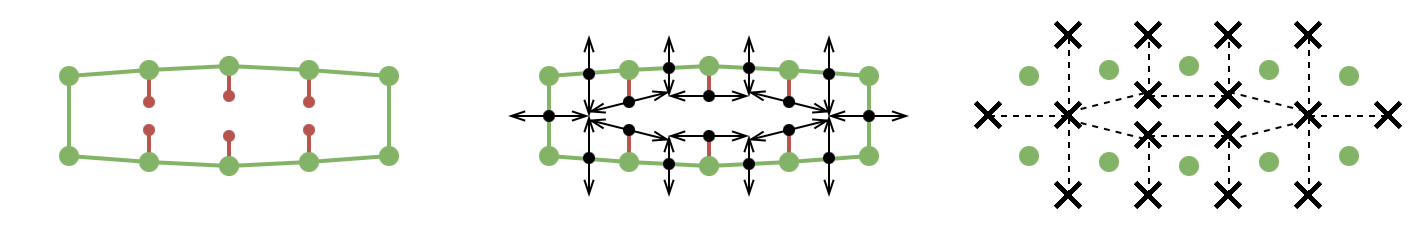}
         \caption{Obtaining the mesh nodes and faces of all internal and geometry boundary faces.}
         \label{fig:theorem1a}
     \end{subfigure}
     %\hfill
     \begin{subfigure}[b]{0.5\textwidth}
         \centering
         \includegraphics[width=\textwidth]{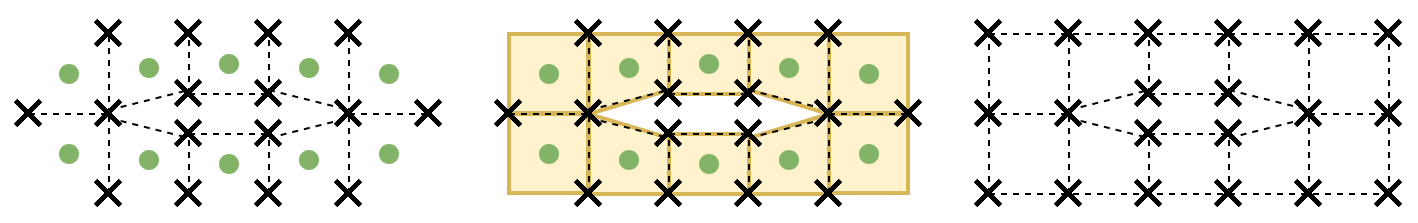}
         \caption{Obtaining corner mesh nodes and farfield boundary faces.}
         \label{fig:theorem1b}
     \end{subfigure}
        \caption{\jess{Reconstructing the original CFD mesh from the FVF.}}
        \label{fig:theorem1}
\end{figure}
}
}
\section{Experiments}
%\section{Datasets \& Experiments}
\label{sec:expt}
% We evaluate our proposed geometric and finite volume features on two databases, 2DSHAPES~\citep{viquerat2020,chen2022} and AirfRANS~\citep{airfrans}, and
% three state-of-the-art GNN methods and their respective training schemes in the CFD--AI literature, Chen-GCNN~\citep{chen2021}, Graph U-Net~\citep{airfrans}, and  CFDGCN~\citep{cfdgcn}. \naheed{%We demonstrate the
% While all three state-of-the-art methods utilise mesh node-based graphs (Figure~\ref{fig:meshgraph}), 
% we \naheed{adapt} both datasets to cell centroid-based graphs (Figure~\ref{fig:cfdgraph}) \yli{to train the models with FVF.}
% Models that do not use FVF \yli{are} trained on mesh node-based graphs to \yli{maintain consistency} with the baselines.
% \yli{The effectiveness of residual training is demonstrated only on CFDGCN with the AirfRANS dataset because CFDGCN is the only method that takes in low-resolution data.} % assumed \naheed{low-resolution} data  to be available.
% }

We evaluate our proposed geometric and finite volume features on two databases, 2DSHAPES~\citep{viquerat2020,chen2022} and AirfRANS~\citep{airfrans}, and
\jess{five} state-of-the-art GNN methods and their respective training schemes in the CFD--AI literature, \jess{MeshGraphNet~\cite{deepmind2020}, BSMS-GCNN~\cite{bsmsgnn2022},} Chen-GCNN~\citep{chen2021}, Graph U-Net~\citep{airfrans}, and  CFDGCN~\citep{cfdgcn}. While all state-of-the-art methods utilise mesh node-based graphs (Figure~\ref{fig:meshgraph}), 
we adapt both datasets to cell centroid-based graphs (Figure~\ref{fig:cfdgraph}) to train the models with FVF. Models that do not use FVF are trained on mesh node-based graphs to maintain consistency with the baselines. The effectiveness of residual training is demonstrated only on CFDGCN with the AirfRANS dataset because CFDGCN is the only method that takes in low-resolution data.

\textbf{2DSHAPES} is a collection of 2000 random shapes where each shape is generated by connected \naheed{B\'ezier curves} along with their steady-state velocity % $(\bar{u}_x,\bar{u}_y)$ 
vector $[\bar{u}_x,\bar{u}_y]^T$ and pressure $\bar{p}$, % fields 
at Reynolds number = 10. The training set, validation set, and test set consist of 1600, 200, and 200 shapes, respectively. We use this dataset to demonstrate the effectiveness of our features applied to the Chen-GCNN model. 

% \textbf{AirfRANS} is a high fidelity aerodynamics dataset of RANS solutions around 1000 airfoils of different shapes at Reynolds numbers 2 to 6 million and angle-of-attacks $-5^0$ to $+15^0$ degrees. 
% %As proof of concept, we have used the “scarce” subset of AirfRANS in our experiments. 
% We have used the authors’ codebase~\footnote{\url{https://github.com/Extrality/AirfRANS}}  to generate AirfRANS simulations on a coarse mesh (\textit{coarse-AirfRANS}); we use coarse-AirfRANS as low-resolution reference to evaluate the effectiveness of residual training. The training set, validation set, and test set consist of respectively 180, 20, and 200 airfoils along with their steady-state velocity $(\bar{u}_x,\bar{u}_y)$, pressure $(\bar{p})$ and dynamic viscosity $(\nu_t)$ fields. This dataset is used as-is to demonstrate the effectiveness of our features applied to the Graph U-Net. An amended dataset, which excludes the dynamic viscosity fields, is likewise used on the CFDGCN. The dynamic viscosity fields are excluded to more closely match the original CFDGCN architecture, which only predicted velocity and pressure fields.

\textbf{AirfRANS} is a high fidelity aerodynamics dataset of airfoil shapes. Their (RANS) solutions range from Reynolds numbers 2 to 6 million and angle-of-attacks $-5^\circ$ to $+15^\circ$ degrees. In their scarce data regime, the training set, validation set, and test set consist of respectively 180, 20, and 200 airfoils along with their steady-state velocity % 
vector $[\bar{u}_x,\bar{u}_y]^T$, pressure $\bar{p}$, and turbulent viscosity $\nu_t$.
This dataset is used as-is to demonstrate the effectiveness of our features applied to the Graph U-Net. An amended dataset, which excludes the turbulent viscosity field, is likewise used for comparing \jess{BSMS-GNN and }CFDGCN with and without the proposed features. \jess{Similarly, the dataset is amended to further exclude the pressure fields for the MeshGraphNet models. %The turbulent viscosity field is excluded to more closely match the original CFDGCN method, which only predicts velocity and pressure fields.
These fields are excluded to more closely match those predicted by the original methods. }Additionally, we use the authors’ codebase~\citep{airfrans_codebase}
% ~\footnote{\url{https://github.com/Extrality/AirfRANS}} 
to generate RANS solutions of the same set of airfoils on coarser meshes ($\sim20$ times fewer nodes than the original), which we refer to as \textit{coarse-AirfRANS} and utilise as a database of low-resolution reference flow fields to evaluate the effectiveness of residual training on CFDGCN.

\paragraph{Performance metrics.}Our objective is to determine if incorporating the proposed features \naheed{and residual training into the state-of-the-art methods significantly improves performance.} We evaluate different methods with different measures to match the metrics in their respective studies.
\naheed{To compare Chen-GCNN with our models derived from it, we use the MAE loss function, which is the summation of mean-absolute error of flow variables averaged over the test set. For Graph U-Net from Bonnet et al.~\citep{airfrans}, the following \jess{three} types of metrics are employed:}
\eat{
\begin{enumerate}
	% \item \textit{Test Loss:} Loss function (Equation 3 in Bonnet et al. 2022) evaluated on and values averaged over the Test set.
    \item \textit{Test Loss:} Loss function (Equation 3 in~\citep{airfrans}) evaluated on and averaged over the test set.
	\item \textit{Volume MSE:} Mean-squared errors $\bar{u}_x, \bar{u}_y, \bar{p},\nu_t$ of normalised flow field prediction at internal nodes.
	\item \textit{Surface MSE:} Mean-squared error of normalised flow field prediction at boundary nodes. Due to boundary constraints, only pressure $(\bar{p})$  is non-zero at airfoil surfaces.
	% \item \textit{Relative Error:} Relative error in predicted Lift and Drag coefficients $(C_L, C_D)$. 
	% \item \textit{Spearman's correlation:} Spearman's rank correlation coefficient $(\rho_L,\rho_D)$ between the ground truth and predicted force coefficients. A higher value represents better performance.  
\end{enumerate}
}
\begin{enumerate}
\item \textit{Test loss:} Loss function by equation~3 in~\citet{airfrans} evaluated on and averaged over the test set.
\item \textit{Volume MSE:} \naheed{Mean-squared errors of normalised flow field \yli{($[\bar{u}_x,\bar{u}_y]^T$, $\bar{p}$, $\nu_t$)} predictions at internal nodes.}
\item \textit{Surface MSE:} Mean-squared error of normalised flow field predictions at boundary nodes. Due to boundary constraints, only pressure $(\bar{p})$  is non-zero at airfoil surfaces.
% \item \textit{Relative error:} Relative error in \naheed{the} predicted lift and drag coefficients $(C_L, C_D)$. 
% \item \textit{Spearman's correlation:} Spearman's rank correlation coefficient $(\rho_L,\rho_D)$ between the ground truth and predicted force coefficients, \yli{where} \naheed{1 represents the most ideal performance}.
\end{enumerate}
Finally, we evaluate \jess{the models derived from MeshGraphNet and CFDGCN} using \yli{the} MSE loss function, \yli{which is} the mean-squared error of flow field prediction averaged over the test set\jess{, and the models derived from BSMS-GNN with the RMSE or root mean-squared error}.

\subsection{Results}
% \jess{We present results for five state-of-the-art methods. Interested readers may refer to the Appendix for additional results.}
\paragraph{MeshGraphNet~\cite{deepmind2020}.} \jess{The MeshGraphNet model uses an encoder-processor-decoder architecture. The original convolution types were maintained in the experiments. Table~\ref{tab:MGN_results} shows that the adoption of the geometric features (Geo) and the Finite Volume Features (FVF) reduced the MSE by $\sim$41\% from the baseline.}

\begin{table}[htb!]
% \small
\centering
% \resizebox{.95\columnwidth}{!}{
\begin{tabular}{c|c}
\toprule %\hline
\textbf{Models} & \textbf{MSE} $\times10^{-2}$ \\ \midrule %\hline
MeshGraphNet (Baseline) & 5.7571 \\
MeshGraphNet w/ Geo & 3.4683 \\
MeshGraphNet w/ FVF w/ Geo & \textbf{3.3811} \\
% MeshGraphNet w/ Residual Training w/ FVF w/ Geo & 1.8855 \\
\bottomrule %\hline
\end{tabular}
% }
\caption{Performance evaluation using MeshGraphNet on the AirfRANS dataset.}
\label{tab:MGN_results}
\end{table}\vspace{-0.5em}

\paragraph{BSMS-GNN~\cite{bsmsgnn2022}.} \jess{The Bi-Stride Multi-Scale GNN (BSMS-GNN) model also has an encoder-processor-decoder architecture, similar to the MeshGraphNet. Likewise, only the original convolution types were used in the experiment.
%Like the GUNet, it is also multi-resolution model. However, unlike the GUNet, the FVF implementations were hence only applied to the first and last convolutions of the processor, where the graph is at full resolution.
The BSMS-GNN is a multi-scale model, and hence the FVF implementations were only applied to the encoder, decoder, and first and last convolutions of the processor, where the graph is at full resolution. As can be seen in Table~\ref{tab:BSMS_results}, the adoption of our methods results in a $\sim$20\% reduction in the RMSE.}

\begin{table}[htb!]
% \small
\centering
% \resizebox{.95\columnwidth}{!}{
\begin{tabular}{c|c}
\toprule %\hline
\textbf{Models} & \textbf{RMSE} $\times10^{-2}$ \\ \midrule %\hline
BSMS-GNN (Baseline) & 7.7589 \\
BSMS-GNN w/ Geo & 6.7498 \\
BSMS-GNN w/ FVF w/ Geo & \textbf{6.1919} \\
% BSMS-GNN w/ Residual Training w/ FVF w/ Geo & D \\
\bottomrule %\hline
\end{tabular}
% }
\caption{Performance evaluation using BSMS-GNN on the AirfRANS dataset.}
\label{tab:BSMS_results}
\end{table}\vspace{-1em}

\paragraph{Chen-GCNN~\citep{chen2021}.}
On the Chen-GCNN model using invariant edge convolutions, \naheed{Table~\ref{tab:chengcnn} shows that} the \naheed{adoption} of the geometric features and finite volume features
\naheed{reduces MAE} by $\sim$27\% from the baseline.
%Experimental results demonstrate that both the geometric and finite volume features contribute to improving the prediction performance.
We also tested GCNN with SAGE convolution and obtained $\sim$82\% reduction in MAE with respect to the baseline.

% \begin{table}[htb!]
% \small
% \centering
% \caption{\label{tab:chengcnn}\jess{Performance evaluation using Chen-GCNN.}}
%  \color{red}
%  \begin{tabular}{@{}c|c|c@{}}
% \toprule
% \multirow{2}{*}{\textbf{Models}} &
%   \multirow{2}{*}{\textbf{Conv type}} &
%   \multirow{2}{*}{\textbf{\begin{tabular}[c]{@{}c@{}}MAE \\ $\times 10^{-2}$\end{tabular}}} \\
%                         &  &                 \\ \midrule
% Chen-GCNN (Baseline) &
%   \multirow{3}{*}{\begin{tabular}[c]{@{}c@{}}Invariant\\ Edge\\ Convolution\end{tabular}} &
%   0.9186 $\pm$ 0.0153 \\
% Chen-GCNN w/ Geo        &  & \textbf{0.6320 $\pm$ 0.0377}          \\
% Chen-GCNN w/ FVF w/ Geo &  & 0.7875 $\pm$ 0.0671 \\ \midrule \midrule
% Chen-GCNN (Baseline) &
%   \multirow{3}{*}{\begin{tabular}[c]{@{}c@{}}SAGE\\ Convolution\end{tabular}} &
%   6.7638 $\pm$ 0.0539 \\
% Chen-GCNN w/ Geo        &  & 3.9467 $\pm$ 0.0817          \\
% Chen-GCNN w/ FVF w/ Geo &  & \textbf{1.1822 $\pm$ 0.0547} \\ \bottomrule
% \end{tabular}%
% \end{table}

\begin{table}[htb!]
% \small
\centering
% \resizebox{\columnwidth}{!}{
 \begin{tabular}{@{}c|c|c@{}}
\toprule
\multirow{2}{*}{\textbf{Models}} &
  \multirow{2}{*}{\textbf{Conv type}} &
  \multirow{2}{*}{\textbf{\begin{tabular}[c]{@{}c@{}}MAE \\ $\times 10^{-2}$\end{tabular}}} \\
   &  & \\ \midrule
Chen-GCNN (Baseline) &
  \multirow{3}{*}{\begin{tabular}[c]{@{}c@{}}Invariant\\ Edge\\ Convolution\end{tabular}} &
  1.1590 \\
Chen-GCNN w/ Geo        &  & 0.9727          \\
Chen-GCNN w/ FVF w/ Geo &  & \jess{\textbf{0.8491}} \\ \midrule \midrule
Chen-GCNN (Baseline) &
  \multirow{3}{*}{\begin{tabular}[c]{@{}c@{}}SAGE\\ Convolution\end{tabular}} & 6.7103 \\
Chen-GCNN w/ Geo        &  & 3.8041          \\
Chen-GCNN w/ FVF w/ Geo &  & \textbf{1.1982} \\ \bottomrule
\end{tabular}%
% }
\caption{Performance evaluation using Chen-GCNN on the 2DSHAPES dataset.}
\label{tab:chengcnn}
\end{table}\vspace{-1em}

\paragraph{Graph U-Net~\citep{airfrans}.}
We trained Graph U-Net architecture (baseline) with the same experimental set-up described in Appendix L % (Models architecture)
of \naheed{Bonnet et al.~\citep{airfrans}}. %their paper.
The reason for choosing Graph U-Net is that it is the best-performing model reported. We evaluated this baseline with two convolution types: SAGE convolution and vanilla graph convolution. 
In Table \ref{tab:gunet}, we observe that, on SAGE convolution, the \naheed{adoption of the geometric and finite volume features reduces the loss on \naheed{the} test set by about 21\% as indicated by the baseline method's loss on the test set $(1.816\times 10^{-2})$ and that of Graph U-Net w/ FVF w/ Geo $(1.441\times 10^{-2})$}. We also observe that our features improve the volume MSE of \yli{three} out of \yli{four} flow variables as well as \naheed{the surface MSE} of pressure.
% , drag coefficient relative error, and drag coefficient Spearman's correlation. 
% \naheed{%The reason for
% The inferior performance on some metrics such as the volume MSE of $\nu_t$ and relative error of $C_L$ and $\rho_L$ is \yli{attributed to} the loss function introduced in \naheed{Bonnet et al.'s work}~\citep{airfrans}, \yli{which} is not designed to explicitly minimise these metrics.} 
We also tested Graph U-Net \naheed{with vanilla graph convolution} and obtained $71\%$ reduction \naheed{in the loss on the test set with respect to the baseline}.
\begin{table*}[t]
% \resizebox{\columnwidth}{!}{
\centering
% \resizebox{0.8\textwidth}{!}{
\begin{tabular}{@{}c|c|c|cccc|c@{}}
\toprule
\multirow{2}{*}{\textbf{Models}} &
  \multirow{2}{*}{\textbf{\begin{tabular}[c]{@{}c@{}}Conv\\ type\end{tabular}}} &
  \multirow{2}{*}{\textbf{\begin{tabular}[c]{@{}c@{}}Test \\ loss\\ $\times 10^{-2}$\end{tabular}}} &
  \multicolumn{4}{c|}{\textbf{\begin{tabular}[c]{@{}c@{}}Volume MSE\\ $\times 10^{-2}$\end{tabular}}} &
  \textbf{\begin{tabular}[c]{@{}c@{}}Surface\\ MSE\\ $\times 10^{-1}$\end{tabular}} \\ 
  % \cmidrule(l){4-8} 
 &
   &
   &
  $\bar{u}_x$ &
  $\bar{u}_y$ &
  $\bar{p}$ &
  $\nu_t$ &
  $\bar{p}$ \\ \midrule
Graph U-Net (Baseline) &
  \multirow{3}{*}{\begin{tabular}[c]{@{}c@{}}SAGE\\ Convolution\end{tabular}} &
  1.816 &
  1.140 &
  1.429 &
  2.190 &
  \textbf{2.492} &
  0.932 \\
Graph U-Net w/ Geo &
   &
  1.786 &
  1.224 &
  0.599 &
  2.028 &
  3.286 &
  0.828 \\
Graph U-Net w/ FVF w/ Geo &
   &
  \textbf{1.441} &
  \textbf{1.061} &
  \textbf{0.515} &
  \textbf{1.368} &
  2.724 &
  \textbf{0.549} \\ \midrule \midrule
Graph U-Net (Baseline) &
  \multirow{3}{*}{\begin{tabular}[c]{@{}c@{}}Vanilla\\ Graph\\ Convolution\end{tabular}} &
  15.310 &
  19.249 &
  12.845 &
  14.041 &
  15.190 &
  4.633 \\
Graph U-Net w/ Geo &
   &
  12.709 &
  17.408 &
  11.256 &
  10.106 &
  12.146 &
  3.631 \\
Graph U-Net w/ FVF w/ Geo &
   &
  \textbf{4.544} &
  \textbf{2.968} &
  \textbf{2.286} &
  \textbf{4.546} &
  \textbf{8.287} &
  \textbf{1.505} \\ \bottomrule
\end{tabular}%
% }
\caption{Performance evaluation using Graph U-Net on the AirfRANS dataset.}
\label{tab:gunet}
\end{table*}
\begin{table*}[t]
% \small
\centering
 % \resizebox{\columnwidth}{!}{
 \begin{tabular}{@{}c|c|c@{}}
\toprule
\multirow{2}{*}{\textbf{Models}} &
  \multirow{2}{*}{\textbf{Conv type}} &
  \multirow{2}{*}{\textbf{\begin{tabular}[c]{@{}c@{}}MSE \\ $\times 10^{-2}$\end{tabular}}} \\
                        &  &                 \\ \midrule
CFDGCNN (Baseline) &
  \multirow{4}{*}{\begin{tabular}[c]{@{}c@{}}Vanilla\\ Graph\\ Convolution\end{tabular}} &
  0.1211 \\
CFDGCN w/ Geo        &  & 0.1093          \\
CFDGCN w/ FVF w/ Geo &  & 0.0918 \\ 
CFDGCN w/ residual training w/ FVF w/ Geo &  & \textbf{0.0719} \\\midrule \midrule
CFDGCN (Baseline) &
  \multirow{4}{*}{\begin{tabular}[c]{@{}c@{}}SAGE\\ Convolution\end{tabular}} &
  0.1342 \\
CFDGCNN w/ Geo        &  & 0.1092 \\
CFDGCNN w/ FVF w/ Geo &  & 0.0631 \\ 
CFDGCNN w/ residual training w/ FVF w/ Geo &  & \textbf{0.0628} \\ \bottomrule
\end{tabular}%
% }
\caption{Performance evaluation using CFDGCN on the AirfRANS dataset.}
\label{tab:cfdgcn}
\end{table*}
\paragraph{CFDGCN~\citep{cfdgcn}.}
CFDGCN uses a CFD simulator in \naheed{the training and testing scheme} to generate low-resolution data and vanilla graph convolution to enhance learning on a high-resolution mesh with varying initial physical conditions, e.g., angle of attack and Mach number. \naheed{Although CFDGCN generalises well to unknown physical conditions, it} requires the geometry to remain fixed for the purpose of optimising the coarse mesh. This makes the model unable to generalise to different geometries and limits the method's practical functionality.
%Their provided dataset is also insufficient to test models \naheed{that are designed to handle varying geometry}. 
Hence, in our adaption, we removed their differentiable CFD simulator component \naheed{from the training and testing loop and} directly passed the coarse-AirfRANS solutions for upsampling. As the experimental set-up assumed that a low-resolution flow field is available, we adopted and evaluated the \naheed{proposed residual training scheme} in addition to Geo and FVF. In Table~\ref{tab:cfdgcn}, we observe that the predictive error reduces by about 41\%, \naheed{as indicated by the baseline method's MSE $(0.1211 \times 10^{-2})$ 
% and \yli{compared to} 
and that of CFDGCN w/ residual training w/ FVF w/ Geo $(0.0719 \times 10^{-2})$.} 
Incorporating our proposed features and residual training scheme one by one consistently reduces the error, 
suggesting that they are generally effective. 
% suggesting that the proposed features and residual training \naheed{scheme} are generally effective. 
\naheed{We also tested CFDGCN with SAGE convolution and obtained a 53\% reduction in MSE with respect to the baseline}.
\subsection{\jess{Computation Time of the Proposed Features}} 
\jess{
We present the 
running time of computing the proposed features on Coarse-AirfRANS and (Fine) AirfRANS datasets, shown in Table~\ref{tab:compute_time}. 
% Recall that computing SV involves computing distance from a node to all boundary nodes while computing DID involves computing distance from a node to all boundary nodes in a given segment. We have performed these computations on GPU. 
% Since cell centroids, face centroids, face area normal vectors, and cell volume are available from OpenFOAM, our implementation of FVF loads them in GPU and computes the relative distances from cell centroids to face centroids in GPU as well. 
The average computation times of SV and FVF on coarse and fine meshes are quite reasonable. 
% The computational time of DID is higher than that of other features 
The reason the computational time of DID is relatively higher than that of other features is %that 
because the values of each angle segment are calculated consecutively rather than in parallel.
Computational time can be reduced by calculating them in parallel instead, or by using fewer angle segments.
% We can reduce the computational time by using fewer segments. 
% Computing the DID feature offline during mesh generation is another viable choice.
}
\begin{table}[htb!]
\centering
% \resizebox{\columnwidth}{!}{
\begin{tabular}{c|cc}
\toprule
 & \multicolumn{2}{c}{\textbf{Average Computation Time (s)}} \\ \cmidrule{2-3} %\cline{2-3} 
 & \multicolumn{1}{c|}{Coarse-AirfRANS} & AirfRANS \\ \midrule
% \begin{tabular}[c]{@{}c@{}}Average\\ \#Cells\end{tabular} & \multicolumn{1}{c|}{5691.6} & 179682.8261 \\ \midrule
SV & \multicolumn{1}{c|}{0.001} & 0.018\\
DID & \multicolumn{1}{c|}{0.031} & 4.276 \\
FVF & \multicolumn{1}{c|}{0.001} & 0.032 \\ \bottomrule
\end{tabular}
% }
\caption{Computational time of SV, DID and FVF for two mesh resolutions. %: AirfRANS coarse and AirfRANS fine.
}
\label{tab:compute_time}
\end{table}
\section{Conclusion}
\label{sec:conc}
This work presents two novel geometric representations, SV and DID, and the use of FVF in graph convolutions. The SV and DID provide a more complete representation of the geometry to each node. Moreover, the FVF enable the graph convolutions to more closely model the finite volume simulation method. Their effectiveness at reducing prediction error \naheed{has been} shown across two datasets, as well as \jess{five} different state-of-the-art methods and training scenarios 
% \naheed{in the literature} 
using various types of graph convolution. Additionally, this paper demonstrates the ability of residual training to further improve accuracy in scenarios with low-resolution data. \vspace{-0.5em}
\section*{Acknowledgement}
The research, undertaken in the Rolls-Royce Corporate Lab
@ Nanyang Technological University, is supported by the
Singapore Government (Industry Alignment fund IAF-ICP
Grant - I1801E0033). We thank Wang Yi and Bryce Conduit from Rolls-Royce plc. for their feedback on this work.
% \newpage
% \bibliographystyle{unsrtnat} % plainnat}
% \bibliography{mybibfile}
\bibliography{mybibfile}

\begin{thebibliography}{31}
\providecommand{\natexlab}[1]{#1}

\bibitem[{Baque et~al.(2018)Baque, Remelli, Fleuret, and Fua}]{baque2018}
Baque, P.; Remelli, E.; Fleuret, F.; and Fua, P. 2018.
\newblock Geodesic convolutional shape optimization.
\newblock In \emph{International Conference on Machine Learning}, 472--481.
  PMLR.

\bibitem[{Battaglia et~al.(2018)Battaglia, Hamrick, Bapst, Sanchez-Gonzalez,
  Zambaldi, Malinowski, Tacchetti, Raposo, Santoro, Faulkner
  et~al.}]{battaglia2018relational}
Battaglia, P.~W.; Hamrick, J.~B.; Bapst, V.; Sanchez-Gonzalez, A.; Zambaldi,
  V.; Malinowski, M.; Tacchetti, A.; Raposo, D.; Santoro, A.; Faulkner, R.;
  et~al. 2018.
\newblock Relational inductive biases, deep learning, and graph networks.
\newblock \emph{arXiv preprint arXiv:1806.01261}.

\bibitem[{Belbute-Peres, Economon, and Kolter(2020)}]{cfdgcn}
Belbute-Peres, F. D.~A.; Economon, T.; and Kolter, Z. 2020.
\newblock Combining differentiable {PDE} solvers and graph neural networks for
  fluid flow prediction.
\newblock In \emph{International Conference on Machine Learning}, 2402--2411.
  PMLR.

\bibitem[{Bonnet et~al.(2022{\natexlab{a}})Bonnet, Mazari, Cinella, and
  Gallinari}]{airfrans}
Bonnet, F.; Mazari, J.~A.; Cinella, P.; and Gallinari, P. 2022{\natexlab{a}}.
\newblock {AirfRANS}: {High} Fidelity Computational Fluid Dynamics Dataset for
  Approximating {Reynolds}-Averaged {Navier--Stokes} Solutions.
\newblock In \emph{Neural Information Processing Systems (NeuRIPS 2022)}.

\bibitem[{Bonnet et~al.(2022{\natexlab{b}})Bonnet, Mazari, Cinella, and
  Gallinari}]{airfrans_codebase}
Bonnet, F.; Mazari, J.~A.; Cinella, P.; and Gallinari, P. 2022{\natexlab{b}}.
\newblock Github codebase of {AirfRANS}.
\newblock \url{https://github.com/Extrality/AirfRANS}.

\bibitem[{Cao et~al.(2023)Cao, Chai, Li, and Jiang}]{bsmsgnn2022}
Cao, Y.; Chai, M.; Li, M.; and Jiang, C. 2023.
\newblock Efficient Learning of Mesh-Based Physical Simulation with Bi-Stride
  Multi-Scale Graph Neural Network.
\newblock In \emph{Proceedings of the 40th International Conference on Machine
  Learning}, 3541--3558. PMLR.

\bibitem[{Chaves et~al.(2020)Chaves, Duarte, Coltro, and Santos}]{chaves2020}
Chaves, I.; Duarte, L.; Coltro, W.; and Santos, D. 2020.
\newblock Droplet length and generation rate investigation inside microfluidic
  devices by means of {CFD} simulations and experiments.
\newblock \emph{Chemical Engineering Research and Design}, 161: 260--270.

\bibitem[{Chen, Hachem, and Viquerat(2021)}]{chen2021}
Chen, J.; Hachem, E.; and Viquerat, J. 2021.
\newblock Graph neural networks for laminar flow prediction around random
  two-dimensional shapes.
\newblock \emph{Physics of Fluids}, 33(12): 123607.

\bibitem[{Chen et~al.(2022)Chen, Viquerat, Heymes, and Hachem}]{chen2022}
Chen, J.; Viquerat, J.; Heymes, F.; and Hachem, E. 2022.
\newblock A twin-decoder structure for incompressible laminar flow
  reconstruction with uncertainty estimation around {2D} obstacles.
\newblock \emph{Neural Computing and Applications}, 1--17.

\bibitem[{Danel et~al.(2020)Danel, Spurek, Tabor, {\'S}mieja, Struski,
  S{\l}owik, and Maziarka}]{danel2020}
Danel, T.; Spurek, P.; Tabor, J.; {\'S}mieja, M.; Struski, {\L}.; S{\l}owik,
  A.; and Maziarka, {\L}. 2020.
\newblock Spatial graph convolutional networks.
\newblock In \emph{Neural Information Processing: 27th International
  Conference, ICONIP 2020, Bangkok, Thailand, November 18--22, 2020,
  Proceedings, Part V}, 668--675. Springer.

\bibitem[{Demirel(2021)}]{demirel2021}
Demirel, Y. 2021.
\newblock \emph{{CFD} Simulations of Marine Hydrodynamics}.
\newblock MDPI AG.
\newblock ISBN 9783036523354.

\bibitem[{Gao and Ji(2019)}]{gunet}
Gao, H.; and Ji, S. 2019.
\newblock Graph {U-Nets}.
\newblock In \emph{International Conference on Machine Learning}, 2083--2092.
  PMLR.

\bibitem[{Guo, Li, and Iorio(2016)}]{guo2016}
Guo, X.; Li, W.; and Iorio, F. 2016.
\newblock Convolutional neural networks for steady flow approximation.
\newblock In \emph{Proceedings of the 22nd ACM SIGKDD International Conference
  on Knowledge Discovery and Data Mining}, 481--490.

\bibitem[{Hamilton, Ying, and Leskovec(2017)}]{graphsage}
Hamilton, W.; Ying, Z.; and Leskovec, J. 2017.
\newblock Inductive representation learning on large graphs.
\newblock \emph{Advances in Neural Information Processing Systems}, 30.

\bibitem[{Horie and Mitsume(2022)}]{horie2022}
Horie, M.; and Mitsume, N. 2022.
\newblock Physics-Embedded Neural Networks: Graph Neural {PDE} Solvers with
  Mixed Boundary Conditions.
\newblock \emph{Advances in Neural Information Processing Systems}, 35:
  23218--23229.

\bibitem[{Kochkov et~al.(2021)Kochkov, Smith, Alieva, Wang, Brenner, and
  Hoyer}]{kochkov2021}
Kochkov, D.; Smith, J.~A.; Alieva, A.; Wang, Q.; Brenner, M.~P.; and Hoyer, S.
  2021.
\newblock Machine learning-accelerated computational fluid dynamics.
\newblock \emph{Proceedings of the National Academy of Sciences}, 118(21):
  e2101784118.

\bibitem[{Liu et~al.(2020)Liu, Tang, Huang, and Lu}]{liu2020}
Liu, B.; Tang, J.; Huang, H.; and Lu, X.-Y. 2020.
\newblock Deep learning methods for super-resolution reconstruction of
  turbulent flows.
\newblock \emph{Physics of Fluids}, 32(2): 025105.

\bibitem[{Martins(2022)}]{martins2022}
Martins, J.~R. 2022.
\newblock Aerodynamic design optimization: {Challenges} and perspectives.
\newblock \emph{Computers \& Fluids}, 239: 105391.

\bibitem[{Moukalled, Mangani, and Darwish(2016)}]{cfdfvbook}
Moukalled, F.; Mangani, L.; and Darwish, M. 2016.
\newblock \emph{The finite volume method in computational fluid dynamics: {An}
  advanced introduction with {OpenFOAM}{\textregistered} and
  {Matlab}{\textregistered}}.
\newblock Springer.

\bibitem[{Ogoke et~al.(2020)Ogoke, Meidani, Hashemi, and
  Farimani}]{ogoke2020graph}
Ogoke, F.; Meidani, K.; Hashemi, A.; and Farimani, A.~B. 2020.
\newblock Graph convolutional neural networks for body force prediction.
\newblock \emph{arXiv preprint arXiv:2012.02232}.

\bibitem[{Pfaff et~al.(2021)Pfaff, Fortunato, Sanchez-Gonzalez, and
  Battaglia}]{deepmind2020}
Pfaff, T.; Fortunato, M.; Sanchez-Gonzalez, A.; and Battaglia, P. 2021.
\newblock Learning Mesh-Based Simulation with Graph Networks.
\newblock In \emph{International Conference on Learning Representations}.

\bibitem[{Raissi, Perdikaris, and Karniadakis(2019)}]{raissi2019}
Raissi, M.; Perdikaris, P.; and Karniadakis, G.~E. 2019.
\newblock Physics-informed neural networks: {A} deep learning framework for
  solving forward and inverse problems involving nonlinear partial differential
  equations.
\newblock \emph{Journal of Computational Physics}, 378: 686--707.

\bibitem[{Ranade, Hill, and Pathak(2021)}]{ranade2021discretizationnet}
Ranade, R.; Hill, C.; and Pathak, J. 2021.
\newblock {DiscretizationNet}: {A} machine-learning based solver for
  {Navier--Stokes} equations using finite volume discretization.
\newblock \emph{Computer Methods in Applied Mechanics and Engineering}, 378:
  113722.

\bibitem[{Ronneberger, Fischer, and Brox(2015)}]{unet2015}
Ronneberger, O.; Fischer, P.; and Brox, T. 2015.
\newblock {U-Net}: {Convolutional} networks for biomedical image segmentation.
\newblock In \emph{Medical Image Computing and Computer-Assisted
  Intervention--MICCAI 2015: 18th International Conference, Munich, Germany,
  October 5-9, 2015, Proceedings, Part III 18}, 234--241. Springer.

\bibitem[{Sanchez-Gonzalez et~al.(2020)Sanchez-Gonzalez, Godwin, Pfaff, Ying,
  Leskovec, and Battaglia}]{sanchez2020learning}
Sanchez-Gonzalez, A.; Godwin, J.; Pfaff, T.; Ying, R.; Leskovec, J.; and
  Battaglia, P. 2020.
\newblock Learning to simulate complex physics with graph networks.
\newblock In \emph{International Conference on Machine Learning}, 8459--8468.
  PMLR.

\bibitem[{Tasri and Susilawati(2021)}]{cfdfvpaper}
Tasri, A.; and Susilawati, A. 2021.
\newblock Accuracy of compact-stencil interpolation algorithms for unstructured
  mesh finite volume solver.
\newblock \emph{Heliyon}, 7: e06875.

\bibitem[{Vijayashree and Ganesan(2018)}]{vijashree2018}
Vijayashree; and Ganesan, V. 2018.
\newblock Application of {CFD} for Analysis and Design of {IC} Engines.
\newblock \emph{Advances in Internal Combustion Engine Research}, 251--306.

\bibitem[{Viquerat and Hachem(2020)}]{viquerat2020}
Viquerat, J.; and Hachem, E. 2020.
\newblock A supervised neural network for drag prediction of arbitrary {2D}
  shapes in laminar flows at low {Reynolds} number.
\newblock \emph{Computers \& Fluids}, 210: 104645.

\bibitem[{Yang et~al.(2019)Yang, Zhang, Tian, Wang, Xue, and
  Liao}]{yang2019deep}
Yang, W.; Zhang, X.; Tian, Y.; Wang, W.; Xue, J.-H.; and Liao, Q. 2019.
\newblock Deep learning for single image super-resolution: {A} brief review.
\newblock \emph{IEEE Transactions on Multimedia}, 21(12): 3106--3121.

\bibitem[{Zhang et~al.(2021)Zhang, Kwok, Liu, Jiang, Dong, and
  Wang}]{zhang2021}
Zhang, S.; Kwok, K.~C.; Liu, H.; Jiang, Y.; Dong, K.; and Wang, B. 2021.
\newblock A {CFD} study of wind assessment in urban topology with complex wind
  flow.
\newblock \emph{Sustainable Cities and Society}, 71: 103006.

\bibitem[{Zhang et~al.(2018)Zhang, Tian, Kong, Zhong, and
  Fu}]{zhang2018residual}
Zhang, Y.; Tian, Y.; Kong, Y.; Zhong, B.; and Fu, Y. 2018.
\newblock Residual dense network for image super-resolution.
\newblock In \emph{Proceedings of the IEEE Conference on Computer Vision and
  Pattern Recognition}, 2472--2481.

\end{thebibliography}
% \clearpage
% \newpage
\appendix
\section*{Technical Appendix}
% \section{Proofs}
\section{Relation Between a CFD Mesh and its FVF}
\label{sec:FVFproof}
In this section, we show that any common 2D mesh can be reconstructed
from the FVF. Specifically, assuming that in a 2D mesh, 
\begin{enumerate}
    \item the cells along the farfield have no more than two boundary faces each, and
    \item the faces of each cell enclose a singular volume,
\end{enumerate}
the relative positions of all the nodes and the faces of a mesh can be uniquely deduced given
\begin{enumerate}
    \item the volumes of all cells $\textrm{V}_i$,
    \item the face area normal vector $\textbf{S}_{f,ij}$ of all internal and geometry boundary faces, and
    \item \jess{the vector from the  cell centroids to the face centroids}
    %relative position of the face centroids to the cell centroids 
    $(\bm{c}_{f,ij}-\bm{x}_{i})$ and $(\bm{c}_{f,ij} - \bm{x}_j)$ of all internal and geometry boundary faces.
\end{enumerate}
By the "relative positions" $\bm{a}'_i$ of mesh nodes that have the absolute positions $\bm{a}_i$, we mean that there exists some constant $\alpha$ such that $\bm{a}_i=(\bm{a}'_i-\alpha), \forall i$.
%By "relative positions" $\bm{a}'_1, \bm{a}'_2, \dots$ of mesh nodes that have the corresponding absolute positions $\bm{a}_1,\bm{a}_2,\dots$, we mean that that there exists some constant $\alpha$ such that $\bm{a}_1=(\bm{a}'_1-\alpha), \bm{a}_2=(\bm{a}'_2-\alpha), \dots$.
By the principle of translational invariance, the absolute positions of the mesh points are inconsequential to the flowfield as long as their relative positions remain the same. We make no further notational distinction between the two in this section.

\begin{lemma}
    Given the (relative) position of the face centroid $\bm{c}_{f,ij}$ and face area normal vector $\textbf{S}_{f,ij} = (S_x,S_y)$ of a face in a 2D mesh, the unique (relative) positions of the mesh nodes $\bm{a}, \bm{b}$ of the face, and the existence of the face, can be deduced.
    \label{lemma1}
    \begin{proof}
    From the (relative) position of the face centroid $\bm{c}_{f,ij}$, we know that $(\bm{a}+\bm{b})/2=\bm{c}_{f,ij}$. Likewise, we can assume without loss of generality that $(\bm{b}-\bm{a})$ is the face area normal vector $\textbf{S}_{f,ij}$ rotated by a right angle clockwise, or $(\bm{b}-\bm{a}) = (S_y,-S_x)$. The unique (relative) positions of the mesh nodes $\bm{a}, \bm{b}$ can be solved from this system of two linear equations. The existence of a face shared by them is obvious.
    \end{proof}
\end{lemma}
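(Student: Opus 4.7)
The plan is to exploit the fact that in a 2D mesh a face is a line segment with exactly two endpoints, so that the centroid and the face area normal vector together give enough algebraic information to pin down those endpoints via a small linear system.

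First I would write down the midpoint relation. Since $\bm{c}_{f,ij}$ is defined as the centroid of the face and the face has only two vertices in 2D, it is simply their midpoint, giving the first vector equation $\bm{a}+\bm{b} = 2\bm{c}_{f,ij}$.

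Next I would use the geometric meaning of the face area normal vector in 2D: $\textbf{S}_{f,ij}$ is orthogonal to the face and has magnitude equal to the face length $\|\bm{b}-\bm{a}\|$. In 2D there are only two unit vectors perpendicular to $\textbf{S}_{f,ij}=(S_x,S_y)$, namely the clockwise and counter-clockwise 90-degree rotations $(S_y,-S_x)$ and $(-S_y,S_x)$. Fixing one rotation convention (say clockwise, exactly as the author's excerpt assumes without loss of generality) yields the second vector equation $\bm{b}-\bm{a}=(S_y,-S_x)$. The other convention merely relabels $\bm{a}\leftrightarrow\bm{b}$, so it does not affect the unordered pair of endpoints.

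Combining the two vector equations gives a $4\times 4$ linear system in the coordinates of $\bm{a}$ and $\bm{b}$ with a trivially invertible coefficient matrix, whose unique solution is
\begin{equation*}
\bm{a}=\bm{c}_{f,ij}-\tfrac{1}{2}(S_y,-S_x),\qquad \bm{b}=\bm{c}_{f,ij}+\tfrac{1}{2}(S_y,-S_x).
\end{equation*}
Uniqueness of the endpoint pair follows from the invertibility of this system, and the existence of a face joining $\bm{a}$ and $\bm{b}$ is immediate once both endpoints are specified. The only real subtlety is justifying the rotation convention, which as noted above is harmless for the \emph{set} of endpoints; no heavy machinery is required and I expect this to be the entirety of the argument.
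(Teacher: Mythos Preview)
Your proposal is correct and follows essentially the same approach as the paper: write the midpoint relation $(\bm{a}+\bm{b})/2=\bm{c}_{f,ij}$, fix the clockwise-rotation convention to get $\bm{b}-\bm{a}=(S_y,-S_x)$, and solve the resulting linear system. You add slightly more detail (the explicit solution and the remark that the opposite rotation convention just swaps $\bm{a}\leftrightarrow\bm{b}$), but the argument is the same.
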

% \begin{figure}[htb!]
% \centering 
% \includegraphics[width=0.65\linewidth]{figs/Fig6a.png}
% \label{fig:lem1}
% \caption{Obtaining the mesh nodes and face from the face centroid and face area normal vector}
% \end{figure}
\begin{lemma}
    Given the (relative) position of a cell's centroid $\bm{x}_{i}$ and the (relative) positions of all but one of its mesh nodes  $\bm{b}_1,\dots,\bm{b}_{n-1}$, the unknown (relative) position of its remaining mesh node $\bm{a}$ can be found to be $n\bm{x}_i-\sum_{j=1}^{n-1} \bm{b}_j$.
    % Given the (relative) position of a cell's centroid $\bm{x}_{i}$, the unknown (relative) position of one of its mesh nodes $\bm{a}$ can be deduced from the (relative) positions of its other mesh nodes $\bm{b}_1,\dots,\bm{b}_{n-1}$, where $n$ is the total number of mesh nodes associated to the cell.
    \label{lemma2}
    \begin{proof}
    From the (relative) position of the cell centroid $\bm{x}_{i}$, we know that $\frac{1}{n}\left(\bm{a}+\sum_{j=1}^{n-1} \bm{b}_j\right)=\bm{x}_{i}$. The rest is obvious.
    % The unique (relative) position of at least one mesh node $\bm{a}$ can be found to be $n\bm{x}_i-\sum_{j=1}^{n-1} \bm{b}_j$.
    \end{proof}
\end{lemma}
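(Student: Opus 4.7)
The plan is to treat this as a one-line algebraic inversion based on the defining formula for the cell centroid as the arithmetic mean of the mesh nodes that bound the cell. First I would invoke the definition: since the cell has $n$ mesh nodes, namely $\bm{a}$ together with $\bm{b}_1,\dots,\bm{b}_{n-1}$, its centroid satisfies
\[
\bm{x}_i \;=\; \frac{1}{n}\!\left(\bm{a} + \sum_{j=1}^{n-1}\bm{b}_j\right).
\]
From here the result is obtained by multiplying both sides by $n$ and isolating $\bm{a}$, giving $\bm{a}=n\bm{x}_i-\sum_{j=1}^{n-1}\bm{b}_j$. Uniqueness is automatic because the equation above is linear in the single unknown $\bm{a}$ with nonzero coefficient $1/n$.

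The only subtlety I would flag concerns the word ``relative'' in the statement. Because the lemma is applied in the reconstruction argument where all positions are only determined up to a common translation $\alpha$, I would verify that the formula is compatible with that ambiguity: replacing $\bm{x}_i$ and each $\bm{b}_j$ by $\bm{x}_i+\alpha$ and $\bm{b}_j+\alpha$ gives $n(\bm{x}_i+\alpha)-\sum_{j=1}^{n-1}(\bm{b}_j+\alpha)=n\bm{x}_i-\sum_{j=1}^{n-1}\bm{b}_j+\alpha$, so the deduced node $\bm{a}$ is shifted by the same $\alpha$, as required. This is the only place where one might worry, and it resolves trivially.

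There is no serious obstacle here; the statement is essentially a reformulation of the centroid definition. The one expository choice to make is whether to mention that the lemma implicitly uses the convention, consistent with the preceding Lemma~\ref{lemma1} and with the cell-centroid-based graph construction of Section~\ref{sec:prelim}, that the cell centroid is the unweighted mean of its vertex mesh nodes rather than, say, an area-weighted centroid. I would include a one-sentence remark to this effect so that the reader does not confuse the vertex-mean used throughout the reconstruction argument with the geometric (area) centroid that would arise for non-convex or highly irregular cells.
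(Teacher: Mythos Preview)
Your proposal is correct and follows exactly the same approach as the paper: invoke the centroid-as-arithmetic-mean identity $\bm{x}_i=\tfrac{1}{n}\bigl(\bm{a}+\sum_{j=1}^{n-1}\bm{b}_j\bigr)$ and solve for $\bm{a}$. Your additional remarks on translation invariance and on the vertex-mean convention are sound expository additions, but the paper's own proof omits them and simply says ``the rest is obvious.''
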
%\vspace{-0.5cm}
\begin{lemma}
    If and only if $\bm{x}_{i}$ is the (relative) position of a cell centroid with the mesh nodes of (relative) positions
    $\bm{b}_1,\dots,\bm{b}_{n-1}$, then $\bm{x}_i=n\bm{x}_i-\sum_{j=1}^{n-1} \bm{b}_j$.
    \label{lemma3}
    \begin{proof}
    % \[\frac{1}{n-1}\sum_{j=1}^{n-1} \bm{b}_j=\bm{x}_i\Longleftrightarrow\sum_{j=1}^{n-1} \bm{b}_j=(n-1)\bm{x}_i\Longleftrightarrow\sum_{j=1}^{n-1} \bm{b}_j=n\bm{x}_i-\bm{x}_i\Longleftrightarrow \bm{x}_i=n\bm{x}_i-\sum_{j=1}^{n-1} \bm{b}_j\]
    \[
    \begin{aligned}
        &\frac{1}{n-1}\sum_{j=1}^{n-1} \bm{b}_j=\bm{x}_i\Leftrightarrow\sum_{j=1}^{n-1} \bm{b}_j=(n-1)\bm{x}_i\quad\\
        &\quad\Leftrightarrow\sum_{j=1}^{n-1} \bm{b}_j=n\bm{x}_i-\bm{x}_i\Leftrightarrow \bm{x}_i=n\bm{x}_i-\sum_{j=1}^{n-1} \bm{b}_j
    \end{aligned}
    \]
    \end{proof}
\end{lemma}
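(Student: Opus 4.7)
The plan is to prove the biconditional by unfolding the definition of a centroid as the arithmetic mean of its associated mesh nodes and performing a short chain of reversible algebraic manipulations. Because the only primitive being used is the definition of the centroid of $n-1$ points, namely $\bm{x}_i = \frac{1}{n-1}\sum_{j=1}^{n-1}\bm{b}_j$, and everything else is elementary linear algebra, each step of the derivation will be stated as an equivalence ($\Leftrightarrow$) rather than an implication, so that the ``if and only if'' is obtained for free.

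First I would clear the denominator: multiplying the defining equation by $n-1$ gives $\sum_{j=1}^{n-1}\bm{b}_j = (n-1)\bm{x}_i$, which is clearly reversible since $n\geq 2$. Next I would split the coefficient on the right-hand side as $(n-1)\bm{x}_i = n\bm{x}_i - \bm{x}_i$, so the equation becomes $\sum_{j=1}^{n-1}\bm{b}_j = n\bm{x}_i - \bm{x}_i$. Finally, rearranging by moving $\sum_{j=1}^{n-1}\bm{b}_j$ to the right and $\bm{x}_i$ to the left yields $\bm{x}_i = n\bm{x}_i - \sum_{j=1}^{n-1}\bm{b}_j$, which is the desired identity. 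Since every step is an equivalence of linear equations over $\mathbb{R}^t$, the forward and backward directions hold simultaneously.

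There is essentially no hard part here: the statement is a one-line algebraic restatement of the centroid condition, and the only thing to be careful about is the bookkeeping of the factor $n$ versus $n-1$. The lemma is best read in the context of the reconstruction argument, where it serves as a consistency check: the ``unknown mesh node'' $\bm{a} = n\bm{x}_i - \sum_{j=1}^{n-1}\bm{b}_j$ produced by Lemma~\ref{lemma2} coincides with the centroid $\bm{x}_i$ precisely when the listed $\bm{b}_1,\dots,\bm{b}_{n-1}$ already exhaust the mesh nodes of the cell, so the iff form is exactly what is needed downstream.
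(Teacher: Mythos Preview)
Your proposal is correct and follows essentially the same approach as the paper: both start from the centroid definition $\bm{x}_i=\frac{1}{n-1}\sum_{j=1}^{n-1}\bm{b}_j$, clear the denominator, rewrite $(n-1)\bm{x}_i$ as $n\bm{x}_i-\bm{x}_i$, and rearrange, with each step an equivalence. Your added remark about the role of the lemma as a consistency check in the reconstruction argument is also accurate.
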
%\vspace{-0.5cm}
% \begin{figure}[htb!]
%      \centering
%      \begin{subfigure}[b]{0.45\textwidth}
%          \centering
%     \includegraphics[width=\textwidth]{figs/Fig6bi.png}
%          \caption{\small Obtaining the cell's one unknown mesh node from its cell centroid and other mesh nodes.}
%          %\label{fig:meshgraph}
%      \end{subfigure}
%      \hfill
%      \begin{subfigure}[b]{0.45\textwidth}
%          \centering
%          \includegraphics[width=\textwidth]{figs/Fig6bii.png}
%          \caption{\small Determining that a cell has no other unknown mesh nodes from its cell centroid and mesh nodes.}
%          %\label{fig:cfdgraph}
%      \end{subfigure}
%         \caption{}
%         \label{fig:lem23}
% \end{figure}
\begin{theorem} [The Completeness of FVF]
    Let $M$ be a 2D mesh such that the cells along its farfield have no more than $2$ boundary faces each, and the faces of each of its cells enclose a singular volume.
    If a graph $G=(V,E)$ is the cell centroid-based graph representation of $M$, the relative positions of all the nodes and faces of $M$ can be uniquely deduced given {\normalfont $\textrm{V}_i$, } $\forall i\in V$, and $\textbf{S}_{f,ij}$, $(\bm{c}_{f,ij}-\bm{x}_{i})$ and $(\bm{c}_{f,ij} - \bm{x}_j)$, $\forall(i,j)\in E$.
\end{theorem}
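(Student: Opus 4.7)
The plan is to reconstruct $M$ in three stages, each leveraging one of the preceding lemmas in sequence. In the first stage, I would exploit the connectivity of $G$ to propagate centroid positions. Fixing any single cell centroid $\bm{x}_{i_0}$ as the origin, every edge $(i,j)\in E$ supplies the two vectors $(\bm{c}_{f,ij}-\bm{x}_i)$ and $(\bm{c}_{f,ij}-\bm{x}_j)$, which jointly locate $\bm{c}_{f,ij}$ once $\bm{x}_i$ is known and simultaneously locate $\bm{x}_j$. A breadth-first traversal of $G$ therefore fixes every cell centroid and every internal or geometry-boundary face centroid relative to $\bm{x}_{i_0}$.

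In the second stage I would apply Lemma~\ref{lemma1} to every edge $(i,j)\in E$. With $\bm{c}_{f,ij}$ known from Stage~1 and $\textbf{S}_{f,ij}$ supplied directly by the FVF, the lemma uniquely recovers the two mesh-node endpoints of each internal face and each geometry-boundary face. By the first farfield hypothesis (at most two boundary faces per farfield cell), every mesh node is incident to at least one such face except possibly for a single corner node per farfield cell, namely the vertex shared exclusively by that cell's two farfield-boundary faces.

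In the third stage I would sweep over the farfield cells and invoke Lemma~\ref{lemma2}. With $\bm{x}_i$ known and the other mesh nodes $\bm{b}_1,\ldots,\bm{b}_{n-1}$ of the cell already recovered in Stage~2, the unique missing corner is $n\bm{x}_i-\sum_{k=1}^{n-1}\bm{b}_k$; Lemma~\ref{lemma3} furnishes the converse consistency test that identifies when no corner node is in fact missing. The remaining task is to determine how the recovered corner nodes are paired into farfield boundary faces. This is resolved by the second hypothesis: the faces of each farfield cell must enclose a singular volume, which forces a unique choice of closing edges at the farfield for every such cell. The cell volume data $\textrm{V}_i$ play the role of a final consistency check on the deduced cell footprints.

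The main obstacle I anticipate is the third stage, specifically arguing that the two farfield hypotheses jointly rule out any ambiguous pairing of corner nodes into farfield faces. Cells that border the farfield on two consecutive sides, and farfield cells sharing a corner with their neighbours, are the delicate configurations; I would address them by a case analysis that uses the at-most-two-boundary-faces assumption to bound the unknowns per cell and the singular-volume assumption to select the unique closing polygon among the combinatorially possible ones.
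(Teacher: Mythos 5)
Your proposal follows essentially the same route as the paper's proof: propagating cell and face centroids through the connectivity of $G$ via the edge vectors, recovering the endpoints of all internal and geometry-boundary faces with Lemma~\ref{lemma1}, locating the remaining corner nodes of farfield cells with Lemmas~\ref{lemma2} and~\ref{lemma3}, and closing the farfield faces using the singular-volume assumption. Your Stage~3 discussion is in fact somewhat more careful than the paper's, which simply asserts the uniqueness of the farfield face pairing, and your remark that $\textrm{V}_i$ serves only as a consistency check is consistent with the paper never actually invoking the cell volumes in its argument.
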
%\vspace{-0.5cm}
% \paragraph{Proof of Theorem~\ref{theorem1}}
\begin{proof}
By the cell-centroid based graph construction method, each cell of the input mesh corresponds to a node $i\in V$, and each internal face and geometry boundary face of the input mesh corresponds to an edge $(i, j) \in E$.

% The relative position of these face centroids $\bm{c}_{f,ij}$ to their adjacent cell centres $\bm{x}_{i}, \bm{x}_j$ are given in the edge attributes of the FVF as the vectors $(\bm{c}_{f,ij}-\bm{x}_{i})$ and $(\bm{c}_{f,ij} - \bm{x}_j)$.
The vectors $(\bm{c}_{f,ij}-\bm{x}_{i})$ and $(\bm{c}_{f,ij} - \bm{x}_j)$ are given in the edge attributes of the FVF. It is obvious that the positions of all face centroids $\bm{c}_{f,ij}$ and cell centres $\bm{x}_{i}, \bm{x}_j$ relative to one another in a connected mesh can be deduced from this. The face area vector $\textbf{S}_{f,ij}$ is also included as an edge attribute. Hence, from Lemma \ref{lemma1}, all internal faces, geometry adjacent boundary faces, and their respective mesh nodes can be found, shown in Figure \ref{fig:theorem1a}. This already accounts for the significant majority of the original mesh. All that remains are the edges along the farfield boundary, as well as the relative position of any mesh nodes that did not abut an internal face (such as those at corners).

As mentioned before, the relative positions of the cell centroids $\bm{x}_i$ can be deduced by the edge attributes. Assuming that these cells do not have more than $2$ boundary faces along the farfield each, they will have at most $1$ mesh node not associated with an internal edge. This is a fair assumption, considering the typical meshes with triangular or quadrilateral cells, commonly used for 2D CFD simulations.
If there is a node with an unknown relative position, by Lemma \ref{lemma2}, it can be found as $n\bm{x}_i-\sum_{j=1}^{n-1} \bm{b}_j$ where $b_1,\dots,b_{n-1}$ are the known mesh positions. Alternatively, if $n\bm{x}_i-\sum_{j=1}^{n-1} \bm{b}_j$ is equal to the cell centroid as in Lemma \ref{lemma3}, it can be concluded that $\{b_1,\dots,b_{n-1}\}$ is the complete set of mesh nodes. %This is illustrated in Figure \ref{fig:lem23}
% Note that it is possible that the unknown positions of even more mesh nodes can be found, considering that the cell volume $\text{V}_i$—also a function of the cell's mesh nodes' positions—is given in the node attributes. However, it is a fair assumption that the number of unknown nodes will not exceed $1$ considering the typical meshes with triangular or quadrilateral cells, commonly used for 2D CFD simulations.
Finally, the unique faces connecting these mesh nodes along the farfield can be deduced from the assumption that the faces of each cell along the farfield should enclose a singular volume. This is illustrated in Figure \ref{fig:theorem1b}.
\end{proof}

\begin{figure}[tb!]
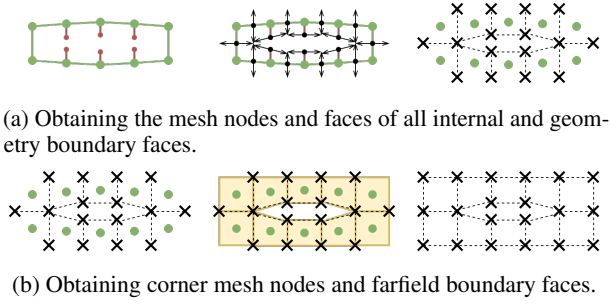

     \centering
     \begin{subfigure}[b]{0.45\textwidth}
         \centering
         \includegraphics[width=\textwidth]{figs/Fig6ci.png}
         \caption{Obtaining the mesh nodes and faces of all internal and geometry boundary faces.}
         \label{fig:theorem1a}
     \end{subfigure}
     \hfill
     \begin{subfigure}[b]{0.45\textwidth}
         \centering
         \includegraphics[width=\textwidth]{figs/Fig6cii.png}
         \caption{Obtaining corner mesh nodes and farfield boundary faces.}
         \label{fig:theorem1b}
     \end{subfigure}
        \caption{Reconstructing the original CFD mesh from the FVF.}
        \label{fig:theorem1}
\end{figure}

% \section{Show some example images of the datasets.} 

\section{Implementation Details}

\subsection{Numerical DID Implementation}
While the continuous DID has already been defined, the geometry boundary had to be represented as a parametric equation, which may not always be available. Instead, we calculate a discrete DID using numerical integration%, described in Algorithm \ref{alg:DID}.
\jess{, described in detail in Algorithm \ref{alg:DID}. Alternatively, the numerical DID calculation of an angle range can be simplified as:
\begin{enumerate}
    \item Obtain the mean distance to the node from every unobstructed boundary point: $d$.
    \item Find the weighted proportion of the angle segment which faces the object boundary: $w_\theta$.
    \item The DID value is calculated to be the weighted average of the mean distance to the boundary and the $\infty$ parameter: $w_\theta\cdot d+(1-w_\theta)\cdot\infty$.
\end{enumerate}
%This is illustrated in Figure~\ref{fig:did}.

 It can be seen from Algorithm \ref{alg:DID} that the process runs in $\mathcal{O}(|K|*|V|)$ time, where $K$ is the set of boundary nodes and $V$ is the set of all nodes. However, as $|K|<<|V|$ in most practical CFD scenarios, the computation cost is almost linear to $|V|$. Also note that a singular geometry was assumed. For multiple-geometry scenarios, it has to be considered that $w_\theta$ may not be represented by a single continuous angle range like $(\theta_{min},\theta_{max})$. The rest of the algorithm may remain unchanged.
}

In all experiments, the starting and ending angles \jess{of the DID segments} were 
$\left(\theta_j,\theta'_j\right)=\left(-\frac{\pi}{4},\frac{\pi}{4}\right)$,
$\left(0,\frac{\pi}{2}\right)$,
$ \left(\frac{\pi}{4},\frac{3\pi}{4}\right)$,
$\left(\frac{\pi}{2},\pi\right)$,
$\left(\frac{3\pi}{4},\frac{5\pi}{4}\right)$,
$\left(\pi,\frac{3\pi}{2}\right)$,
$\left(\frac{5\pi}{4},\frac{7\pi}{4}\right)$,
$\left(\frac{3\pi}{2},2\pi\right)$.
These were chosen to give 8 overlapping arcs each spanning $\frac{\pi}{2}$ degrees, centered at $\frac{\pi}{4}$ intervals. The respective weight functions $w_j(\theta)$ were uniform distributions over the $\left(\theta_j,\theta'_j\right)$ range. Finally, $4$ was used as the large number $\infty$, as this was larger than any finite distance from a point in the mesh to the geometry boundary in any direction.
% For 3D case, the integration can be performed on convex polyhedral cones.
%The following uses a DISCRETE PDF-WEIGHTED AVERAGE of the distance from the reference point to the points on the airfoil within the segment angle.

\begin{algorithm*}[htb!]
\caption{Calculation of the DID of a field}
\label{alg:DID}
\begin{algorithmic}[1]
\Require Set of nodes $V$, positions of each node $pos=[(x_i,y_i):i\in V]$ and boundary indices $bd=[k\in V:k\textrm{ is on the geometry boundary}]$, starting and ending angles $\left[\left(\theta_j,\theta'_j\right):0\leq j<J\right]$, weight functions $[w_j:0\leq j<J]$, and large value $\infty$.
\Ensure $DID$ values $DID_j$ for every segment, for all $i\in V$.
\State Initialise $DID\gets[~]$\\
\Comment{Compute $DID$ for every segment $j$}
\For{$j\in[0,...,J-1]$}
    \State Initialise $DID_j\gets[~]$\\
    \Comment{Compute $DID_i$ for every node $i\in V$}
    \For{$i\in V$}
        \State Initialise $\theta_{min},\theta_{max}\gets\theta'_j,\theta_j$
        \State Initialise $DID_i\gets 0$
        \State Initialise $w\gets 0$\\
        \Comment{Compute distance from point $i$ to every boundary point within its segment range}
        \For{$k\in bd$}
            \State Initialise $\theta_{i,k}\gets\tanh\left((y_j-y_i)/(x_k-x_i)\right)$
           \If{$(\theta_j<\theta_{i,k}<\theta'_j)$ and $(k\textrm{ is  \jess{unobstructed from } }i)$}
                \State $DID_i\gets DID_i+w_j(\theta_{i,k})*\min(||(x_i,y_i)-(x_k,y_k)||,\infty)$\\
                \Comment{Update sum of discrete weights}
                \State $w\gets w+w_j(\theta_{i,k})$\\
                \Comment{Update minimum angle range from point $i$ to point $k$}
                \State $\theta_{min}\gets \theta_{i,k}$ if $\theta_{i,k}< \theta_{min}$
                \State $\theta_{max}\gets \theta_{i,k}$ if $\theta_{i,k}> \theta_{max}$
            %\Else
            %    \State Skip
            \EndIf
        \EndFor\\
        \Comment{Compute weight of mininum angle segment \jess{that faces a boundary}}
        \State $w_{\theta}\gets\left(\int_{\theta_{min}}^{\theta_{max}}w_j(\theta)\,d\theta\right)/\left(\int_{\theta_j}^{\theta'_j}w_j(\theta)\,d\theta\right)$
        \State $DID_i\gets DID_i/w$
        \State $DID_i\gets w_{\theta}*DID_i+(1-w_{\theta})*\infty$
    \EndFor
    % \State $DID_{\theta_{cen}}.append(DID_i)$
    \State $DID_j.append(DID_i)$
\EndFor
% \State $DID.append(DID_{\theta_{cen}})$
\State $DID.append(DID_j)$
\end{algorithmic}
\end{algorithm*}

\subsection{FVF implementation}

\paragraph{Vanilla Graph Convolution and SAGE Convolutions}
For models using vanilla graph convolutions, the convolutions were entirely replaced by the FVGC %described in Section \ref{sec:method}
earlier described when FVF was used.
For models using SAGE convolutions, FVF was used by implementing the FVGC as the aggregation function of the SAGE convolution.
In both cases, the hidden dimension, or the number of filters, used in each convolution was always $3$ for all convolutions in all experiments.

\paragraph{Invariant Edge Convolutions}
Following the notation of the original paper, the Invariant Edge (IVE) convolutions used by \citet{chen2021} update the edge features $\bm{x}_e$ and node features $\bm{x}_v$ into $\bm{x}'_e$ and $\bm{x}'_v$ respectively, according to the following rules
\begin{align}
    \bm{x}'_e&=f_e\left(\frac{\bm{x}_{v_1}+\bm{x}_{v_2}}{2}, \frac{|\bm{x}_{v_1}-\bm{x}_{v_2}|}{2}, \bm{x}_e\right)\;,\\
    \bm{x}'_v&=f_v\left(\bm{x}_v, \sum_{e_i\in N(v)}\bm{x}'_{e_i}\right)\;,
\end{align}
where $v_1$ and $v_2$ are the two nodes connected by edge $e$, $N(v)$ is the set of neighbouring edges around node $v$, and $f_e$ and $f_v$ are both MLPs with 1 hidden layer of 128 neurons. The number of neurons in the output layer, and hence output dimension of the kernel, can be customised. This is similar to the use of multiple filters in FVGC.
Hence, when using FVF with IVE convolutions, only the node and edge attributes have to be implemented, resulting in the following rules
\jess{
\begin{align}
    &\bm{x}^*_e=\bm{q}_e\oplus\bm{x}_e\;,\quad\bm{x}^*_{v}=\bm{x}_{v}\oplus\bm{p}\;,\\
    &\bm{x}'_e=f_e\left(\frac{\bm{x}^*_{v_1}+\bm{x}^*_{v_2}}{2}, \frac{|\bm{x}^*_{v_1}-\bm{x}^*_{v_2}|}{2}, \bm{x}^*_e\right)\;,\\
    &\bm{x}'_v=f_v\left(\bm{x}^*_v, \sum_{e_i\in N(v)}\bm{x}'_{e_i}\right)\;,
\end{align}
}
% \begin{align}
%     &\bm{x}^*_{v}=\bm{x}_{v}\oplus\bm{p}\;,\quad\bm{x}'_e=f_e\left(\frac{\bm{x}^*_{v_1}+\bm{x}^*_{v_2}}{2}, \frac{|\bm{x}^*_{v_1}-\bm{x}^*_{v_2}|}{2}, \bm{x}_e\right)\;,\\
%     &\bm{x}^*_e=f_u(\bm{q}_e)\odot\bm{x}'_e\;,\quad\bm{x}'_v=f_v\left(\bm{x}^*_v, \sum_{e_i\in N(v)}\bm{x}^*_{e_i}\right)\;,
% \end{align}
%where $\bm{p}$ is the node attributes, $\bm{q_e}$ is the edge attributes of the edge $e$, and $f_u$ is an MLP akin to the one with trainable weights $U^T$ and bias $b$ in equation \ref{eqn:fvgcn}.
\jess{where $\bm{p}$ is the node attributes and $\bm{q_e}$ is the edge attributes of the edge $e$.}
Note that the new edge and node features are still $\bm{x}'_e$ and $\bm{x}'_v$, while $\bm{x}^*_e$ and $\bm{x}^*_v$ are just intermediate calculations.

Unlike in vanilla graph convolutions or SAGE convolutions, the hidden dimension is not fixed at $3$, but will instead follow the number of intermediate edge features used in the original work as described in the next section.

% \jess{Additionally, experiments were also done using the MeshGraphNet and BSMS-GNN models, with their original convolution types. The FVF implementations of these convolutions are similar to that with the IVE convolutions}

\paragraph{MeshGraphNet Convolution Implementation}
\jess{
Still following the notation of the original paper, the MeshGraphNet processor designed by \citet{deepmind2020} used convolutions that updated the edge features $\bm{e}^M$ and node features $\bm{v}$ into $\bm{e}'^M$ and $\bm{v}'$ according to the following
\begin{equation}
\begin{aligned}
    &\bm{e}'^M_{ij}=f^M(\bm{e}^M_{ij},\bm{v}_i,\bm{v}_j)\;,\\
    %&e'^W_{ij}=f^W(e^W_{ij},v_i,v_j)\\
    &\bm{v}'_i=f^V(\bm{v}_i,\sum_j\bm{e}'^M_{ij})\;.
\end{aligned}
\end{equation}
Where $v_i$ and $v_j$ are the features of two nodes connected by an edge with edge features $e^M_{ij}$, and $f^M$ and $f^V$ are MLPs with 2 hidden layers of size 128 each. In all MeshGraphNet implementations, they have an output of size 128 as well.
When using FVF, it was changed to the following
\begin{equation}
\begin{aligned}
    &\bm{e}^{*M}_{ij}=\bm{e}^M_{ij}\oplus \bm{q}_{ij},\quad \bm{v}^*_i=\bm{v}_i\oplus \bm{p}_i\;,\\
    &\bm{e}'^M_{ij}=f^M(\bm{e}^{*M}_{ij},\bm{v}^*_i,\bm{v}^*_j)\;,\\
    &\bm{v}'_i=f^V(\bm{v}^*_i,\sum_j\bm{e}'^M_{ij})\;.
\end{aligned}
\end{equation}
We also appended the FV node attributes and FV edge attributes to the inputs of the encoder and decoder convolutions accordingly.
}

\jess{
\paragraph{BSMS-GNN Convolution Implementation}
Likewise, the original BSMS-GNN processor designed by \citet{bsmsgnn2022} used convolutions to update the edge features $\bm{e}_l^s$ and node features $\bm{v}_l$ at level $l$ for a problem involving $S$ edge sets like so
\begin{equation}
\begin{aligned}
    &\bm{e}^s_{l,ij}=f^s_l(\Delta \bm{x}_{l,ij},\bm{v}_{l,i},\bm{v}_{l,j}),\quad s=1,\dots,S\;,\\
    &\bm{v}'_{l,i}=\bm{v}_{l,i}+f^V_l(\bm{v}_{l,i},\sum_j\bm{e}^1_{l,ij},\dots,\sum_j\bm{e}^S_{l,ij})\;.
\end{aligned}
\end{equation}
Where $\Delta \bm{x}_{l,ij}=\bm{x}_i-\bm{x}_j$ is the relative positions of the nodes $i$ and $j$, and $f_l^s$ and $f_l^V$ are MLPs with 2 hidden layers of a hidden dimension of 128. For all BSMS-GNN implementations, an output of size 128 was used as well.
With the FVF, it was changed into
\begin{equation}
\begin{aligned}
    &\bm{v}^*_{l,i}=\bm{v}_{l,i}\oplus p_i\quad,\\
    &\bm{e}^s_{l,ij}=f^s_l(\Delta x_{l,ij},\bm{v}^*_{l,i},\bm{v}^*_{l,j},q_{ij}),\quad s=1,\dots,S,\\
    &\bm{v}'_{l,i}=\bm{v}^*_{l,i}+f^V_l(\bm{v}^*_{l,i},\sum_j\bm{e}^1_{l,ij},\dots,\sum_j\bm{e}^S_{l,ij})
\end{aligned}
\end{equation}
As mentioned before, this was only done for the first and last convolutions of the processor where the graph is at full resolution. Similar to the MeshGraphNet convolution implementation, the FV node attributes were appended to the inputs of the encoder and decoder convolutions as well.
}

\subsection{Model Architectures and Hyperparameter Choices}
\begin{table*}[t!]
% \small
\centering
% \resizebox{.95\columnwidth}{!}{
\begin{tabular}{c|c}
\toprule %\hline
\textbf{Models} & \textbf{Number of trainable parameters} \\ \midrule %\hline
MeshGraphNet (Baseline) & 2282370 \\
MeshGraphNet w/ Geo & 2283650 \\
MeshGraphNet w/ FVF w/ Geo & 2302470 \\
MeshGraphNet w/ Residual Training w/ FVF w/ Geo & 2302470 \\
\bottomrule %\hline
\end{tabular}
% }
\caption{Sizes of the MeshGraphNet models compared.}
\label{tab:MGN_params}
\end{table*}
\begin{table*}[htb!]
% \small
\centering
% \resizebox{.95\columnwidth}{!}{
\begin{tabular}{c|c}
\toprule %\hline
\textbf{Models} & \textbf{Number of trainable parameters} \\ \midrule %\hline
BSMS-GNN (Baseline) & 2578947 \\
BSMS-GNN w/ Geo & 2580227 \\
BSMS-GNN w/ FVF w/ Geo & 2582787 \\
BSMS-GNN w/ Residual Training w/ FVF w/ Geo & 2582787 \\
\bottomrule %\hline
\end{tabular}
% }
\caption{Sizes of the BSMS-GNN models compared.}
\label{tab:BSMS_params}
\end{table*}

\paragraph{MeshGraphNet}
\jess{
The MeshGraphNet used an encoder-processor-decoder structure. The encoder consisted of two MLPs, for encoding node and edge features respectively. The input node and edge dimensions were both 3 for the baseline model. The number of input node channels increased to 13 if SV and DID were being used. Likewise, the input node channels increased again to 14, and the input edge features to 9, if FVF was implemented. It had a hidden and output size of 128. The decoder consisted of just one MLP to decode the node features only, with an input and hidden size of 128. If FVF was being used, the input size increased to 129. The ouput size was 2, for the velocity fields.

The processor had 15 layers of the MeshGraphNet convolutions as earlier described. The input node and edge sizes were 128 unless FVF was implemented when the number of node channels increased to 129, and likewise, the edge channels increased to 134. The number of output nodes and edge channels was consistently 128. All activations used in the model were ReLU. The resulting number of trainable parameters in each model is shown in Table~\ref{tab:MGN_params}.

Each model was trained for 250 epochs on half-precision, using a mean squared error (MSE) training loss with a learning rate of 0.0001 and a batch size of 1.
}

\paragraph{BSMS-GNN}
\jess{
% 
% self.model = self.model_class(pos_dim=2, ld=128,layer_num=self.args.multi_mesh_layer,mlp_hidden_layer=2,MP_times=1,use_FV=True,use_SAF=True,use_dSDF=True,use_res=False)
% 
The BSMS-GNN also had an encoder-processor-decoder structure. The encoder only consisted of one MLP of input size 1 that represented the node type. When SV and DID were used, this increased to 11. Likewise, when FVF was implemented, it increased again to 12. It had two hidden layers with a hidden size of 128 and an output of size 128 as well. The decoder was similarly an MLP with an input of size 128, or 129 if FV was used, and had 2 hidden layers with hidden size 128 and an output size of 3 for the velocity and pressure fields.

The processor had an architecture similar to that of the Graph U-Net, with 9 different scales or levels and only 1 convolution per level. The number of nodes would decrease progressively from the bi-stride pooling method explained in the original paper. The input and output node and edge feature dimensions were 128 each, except when FVF was implemented, and the input node and edge features increased to 129 and 134, respectively. All activations used in the model were ReLU. The resulting number of trainable parameters in each model is shown in Table~\ref{tab:BSMS_params}.

Each model was trained on half-precision, using a mean squared error (MSE) training loss with a starting learning rate of 0.0001 that decayed exponentially at a decay rate of 0.7943 until the learning rate hit a minimum of 0.000001. All BSMS-GNN models were trained for 200 epochs with a batch size of 1.
}

\paragraph{Chen-GCNN}
The input node features to the baseline model consisted of a boolean representation of the geometry and the spatial coordinates. For models using IVE convolution, there were also input edge features, which were simply the average of the node features of each node in the edge. Thus, the input node features and input edge features both have a size of 3. When geometry features were used, SV and DID replaced the boolean representation, making the feature sizes 12 instead.

The model itself had eight convolution and smoothing layers, and finally a $1\times1$ convolution output layer. When IVE convolutions were used, the intermediate edge and node feature dimensions in each layer were $(4,8,16,32,64,64,32,16)$ and $(8,16,32,64,64,32,16,8)$ respectively, as they were in the original work. When SAGE convolutions were used, there were no intermediate edge feature dimensions. Instead, the number of filters used in the convolutions was fixed at $3$, and the node feature dimensions remained $(8,16,32,64,64,32,16,8)$ as in the IVE convolutions.
The final output layer produced $3$ node features representing predicted velocity and pressure. There are also skip connections from the input graph to the output of every convolutional and smoothing block, where the spatial coordinates of the nodes were concatenated to the node features. This is similar to the node attributes concatenated to the node features in the convolutions using FVF, although the cell volume was not used as a node attribute in the original work.

Table~\ref{tab:chengcnn_params} shows the number of trainable parameters in each model. The model depth and width were kept the same as the original Chen-GCNN model~\citep{chen2021} % by Chen et al. 2021
as it was already relatively small.
\begin{table*}[t!]
% \begin{table}[htb!]
% \small
\centering
% \resizebox{.95\columnwidth}{!}{
 \begin{tabular}{@{}c|c|c@{}}
 \toprule
 \textbf{Models}               &\textbf{Conv type}                           &\textbf{Number of trainable parameters} \\ \midrule
 Chen-GCNN (Baseline)          &\multirow{3}{*}{\begin{tabular}[c]{@{}c@{}}Invariant\\ Edge\\ Convolution\end{tabular}} &217853 \\
 Chen-GCNN w/ Geo        &   &222461 \\
 Chen-GCNN w/ FVF w/ Geo &  &227185 \\ \midrule \midrule
 Chen-GCNN (Baseline)          &\multirow{3}{*}{\begin{tabular}[c]{@{}c@{}}SAGE\\ Convolution\end{tabular}}             &20193  \\
 Chen-GCNN w/ Geo        &   &20337  \\
 Chen-GCNN w/ FVF w/ Geo &  &58067  \\ \bottomrule
\end{tabular}%
% }
\caption{Sizes of the Chen-GCNN models compared.}
\label{tab:chengcnn_params}
\end{table*}

\begin{table*}[htb!]
% \begin{table}[htb!]
% \small
\centering
% \resizebox{.95\columnwidth}{!}{
 \begin{tabular}{@{}c|c|c@{}}
 \toprule
 \textbf{Models}                 &\textbf{Conv type}                                                                      &\textbf{Number of trainable parameters} \\ \midrule
 Graph U-Net (Baseline)          &\multirow{3}{*}{\begin{tabular}[c]{@{}c@{}}SAGE\\ Convolution\end{tabular}}             &65820 \\
 Graph U-Net w/ Geo        &                                                                                        &66396 \\
 Graph U-Net w/ FVF w/ Geo &                                                                                        &160465 \\ \midrule \midrule
 Graph U-Net (Baseline)          &\multirow{3}{*}{\begin{tabular}[c]{@{}c@{}}Vanilla\\ Graph\\ Convolution\end{tabular}}              &107667  \\
 Graph U-Net w/ Geo        &                                                                                        &108243  \\
 Graph U-Net w/ FVF w/ Geo &                                                                                        &105201  \\ \bottomrule
\end{tabular}%
% }
\caption{Sizes of the Graph U-Net models compared.}
\label{tab:gunet_params}
\end{table*}

In training, the loss function chosen was the mean absolute error (MAE). We implemented early stopping, where training terminated if the MAE on the validation set did not improve after 60 epochs, or if it reached the maximum of 1000 epochs. %\jess{A batch size of 64 was used for SAGE models and 32 for IVE models}, 
A batch size of 64 was used, and the initial learning rate and decay rate were both $0.002$ to match the original work, with the decay schedule also following the same formula. Further details can be found in the original codebase\footnote{\url{https://github.com/cfl-minds/gnn_laminar_flow}}. The only difference in our training scheme implementation would be that we used half-precision training, and clipped the gradients to $5$.

\paragraph{Graph U-Net}
Our implementation of the Graph U-Net was adapted from the original codebase\footnote{\url{https://github.com/Extrality/AirfRANS}}.
The baseline model started and ended with an MLP encoder and decoder of layer sizes [7, 64, 64, 8] and [8, 64, 64, 4] respectively. The models that used geometry features had an encoder with layer sizes [16, 64, 64, 8] instead, as the SDF input feature was replaced with SV and DID.

For the U-Net itself, the models that did not use FVF both followed the original Graph U-Net architecture exactly. The downward and upward passes had five scales each. At each scale of the downward pass, the number of node features doubled, while the graph nodes were down-sampled by half to create radius graphs of radii 5 cm, 20 cm, 50 cm, 1 m and 10 m respectively. In the upward pass, skip connections concatenated the node feautures of the respective scale in the downward pass to that of the previous scale.
In models that used FVF, however, no down-sampling or up-sampling was done to preserve the mesh structure and its corresponding mesh characteristics. Nevertheless, the depth of the model and the presence of skip-links remained the same.

Table~\ref{tab:gunet_params} shows the number of trainable parameters in each model. Models that used SAGE convolutions kept the original Graph U-Net model width of 8~\citep{airfrans}, causing the model sizes to increase from the use of SV, DID, and FVF. This is to be consistent with the experimental practices of ~\citet{airfrans}, who allowed their model sizes to vary.
On the other hand, when vanilla graph convolutions were used, the width of the models without FVF was increased to 15 to make the model sizes similar to the model with FVF. This is because the models with only vanilla graph convolution without FVF were too small to train effectively when using the original model parameters.
%When implementing FVF, the model does not perform any down-sampling or up-sampling typical of the U-Net structure, as the graph had to maintain the mesh structure. However, the model retains skip-links from the encoding layer to their respective decoding layers.

\begin{table*}[htb!]
% \begin{table}[htb!]
% \small
\centering
% \resizebox{.95\columnwidth}{!}{
 \begin{tabular}{@{}c|c|c|c@{}}
 \toprule
 \textbf{Models}                                 &\textbf{Conv type}   &                                                               \textbf{ Model width}    &\textbf{\#Trainable parameters} \\ \midrule
 CFDGCN (Baseline)                               &\multirow{4}{*}{\begin{tabular}[c]{@{}c@{}}Vanilla\\ Graph\\ Convolution\end{tabular}}  &512 &1057283\\
 CFDGCN w/ Geo                       &      &512                                                                                        &1061891 \\
 CFDGCN w/ FVF w/ Geo                &      & 284                                                                                       &1021780 \\ 
 CFDGCN w/ Residual Training w/ FVF w/ Geo &    & 284                                                                                   &1021780 \\ \midrule \midrule
 CFDGCN (Baseline)                               &\multirow{4}{*}{\begin{tabular}[c]{@{}c@{}}SAGE\\ Convolution\end{tabular}}             &512 &2112003 \\
 CFDGCN w/ Geo                            &  &512                                                                                        &2121219 \\
 CFDGCN w/ FVF w/ Geo                   &   & 315                                                                                       &2153241 \\ 
 CFDGCN w/ Residual Training w/ FVF w/ Geo & &315                                                                                        &2153241 \\ \bottomrule
\end{tabular}%
% }
\caption{Sizes of the CFDGCN models compared.}
\label{tab:cfdgcn_params}
\end{table*}
The learning rate for the experiments was set with a \jess{one-cycle cosine rate capped at 0.001}, with the number of epochs fixed at 1600 as in the scarce data regime of the original work~\citep{airfrans}. Half precision was used, and gradients clipped to 5.
%In the scarce data regime shown in the main paper, with a training set of 200 flow fields, the model will train for 1600 epochs. In the full data regime that used a training set of size 800, however, only 400 epochs was used.

For models that did not use FVF, mesh node-based graphs were used. 
However, the total number of nodes was fixed at 32000 via random subsampling of the input. Edges were formed between nodes within 5 cm of each other, with each node having a maximum of 64 neighbours. In training, each input graph was only subsampled once, and the training loss found against the subsampled ground-truth. In testing, subsampling was done repeatedly till a value was found for all the nodes in the original graph, and nodes with multiple values were assigned the average, before the test losses found against the unsampled ground-truth.
On the other hand, on FVF models, an unsampled cell centroid-based graph was used in both training and testing. As before, this was to maintain the mesh's structure and characteristics represented in the FVF.

\paragraph{CFDGCN}
As explained earlier,
%in Section \ref{sec:expt},
to adapt the CFDGCN model to become generalisable to different geometries, we used the coarse mesh of each flow scenario in the data as a fixed input to the model, rather than as a trainable feature of the model. On all other aspects, however, the architecture of the CFDGCN was largely preserved.

The coarse mesh was upsampled once using squared distance-weighted, k-nearest neighbours interpolation to size of the fine mesh. The input node features of the baseline model were the spatial coordinates, angle of attack, mach number and SDF. If geometry features were used, the SDF was replaced by SV and DID. The graph was passed through 3 graph convolutions before the upsampled coarse mesh was appended to the output of the 3rd layer, and another 3 convolutions was performed to generate the final prediction.

All convolutions in models that did not use FVF had 512 hidden channels, just as the original CFDGCN did~\citep{cfdgcn}. However, the number of hidden channels was adjusted for the FV and residual models to keep model size similar for comparability. Table~\ref{tab:cfdgcn_params} shows the number of trainable parameters in each model. %To be comparable, we keep the same model depth as the original CFDGCN~\citep{cfdgcn}, but adjust the model width such that the model sizes remain the same as CFDGCN.
A batch size of $1$ was used on all experiments, and half precision was used. All other training parameters were kept the same as the original work. 
More details can be found from the codebase\footnote{\url{https://github.com/locuslab/cfd-gcn}}. 

\section{Coarse AirfRANS Dataset}
AirfRANS~\cite{airfrans} provides 1000 simulated airfoil cases but with a consistent mesh resolution. Having simulations of the same airfoil geometry but from a lower resolution is important for learning tasks such as super-resolution, which is needed for CFDGCN \cite{cfdgcn}. Since the residual training scheme is intended, albeit not limited, to demonstrate superior performance on super-resolution task, a lower-resolution variant of the AirfRANS cases were needed. The AirfRANS authors released the mesh generation script, which made it possible.

The majority of settings in the original AirfRANS dataset remain unchanged, except for the mesh resolution. In this modified dataset, the number of cells in all directions has been reduced to one-quarter of the original settings. Additionally, the gradings of the cell expansion have been reset to ensure a smooth transition of the cell thickness as listed in Table \ref{tab:coarse_grading}. 
\begin{table}[htb!]
    % \small
    \centering
    % \resizebox{.95\columnwidth}{!}{
    \begin{tabular}{l|c}
    \toprule
    \textbf{Cell grading}	& \textbf{New values} \\
    \midrule
    yGrading	& 1000 \\
    yUGrading	& 1000 \\
    yDGrading	& 1000 \\
    xUUGrading	& 10 \\
    xDUGrading	& 10 \\
    xUMAeroGrading	& 2 \\
    xDMAeroGrading	& 2 \\
    xMGrading	& 1 \\
    xDTrailGrading	& 0.0001 \\
    xUDGrading	& 0.5 \\
    xDDGrading	& 0.5 \\
    leadUGrading	& 0.05 \\
    leadDGrading	& 0.05 \\
    \bottomrule
    \end{tabular}
    % }
    \caption{The new values of different cell grading for coarse AirfRANS dataset.}
    \label{tab:coarse_grading}
\end{table}

\section{Computational Environment} All experiments are run on a server with 32 Intel(R) Xeon(R) Silver 4110 CPU @ 2.10GHz processors, 256 GB RAM, and 3 dedicated Nvidia V100 GPU cards each with 32GB memory. All models are trained on a single GPU.

\section{Additional Results}

\subsection{Secondary Error Metrics}
\jess{
Due to the limited space available, the performance evaluation using Graph U-Net on the relative error and Spearman's correlation metrics are shown here in Table~\ref{tab:sub_second}. Note that for correlation, the best performing model achieves the closest correlation to 1.
}
\begin{table*}
\centering
\begin{tabular}{@{}c|c|cc|cc@{}}
\toprule
\multirow{2}{*}{\textbf{Models}} &
  \multirow{2}{*}{\textbf{\begin{tabular}[c]{@{}c@{}}Conv\\ type\end{tabular}}} &
  \multicolumn{2}{c|}{\textbf{\begin{tabular}[c]{@{}c@{}}Relative \\ error\end{tabular}}} &
  \multicolumn{2}{c}{\textbf{\begin{tabular}[c]{@{}c@{}}Spearman’s\\ correlation\end{tabular}}} \\
 &
   &
  $C_D$ &
  $C_L$ &
  $\rho_D$ &
  $\rho_L$ \\ \midrule
Graph U-Net (Baseline) &
  \multirow{3}{*}{\begin{tabular}[c]{@{}c@{}}SAGE\\ Convolution\end{tabular}} &
  5.315 &
  \textbf{0.211} &
  0.1151 &
  0.992 \\
Graph U-Net w/ Geo &
   &
  3.794 &
  0.279 &
  -0.0667 &
  \textbf{0.995} \\
Graph U-Net w/ FVF w/ Geo &
   &
  \textbf{0.949} &
  0.855 &
  \textbf{0.495} &
  0.992 \\ \midrule \midrule
Graph U-Net (Baseline) &
  \multirow{3}{*}{\begin{tabular}[c]{@{}c@{}}Vanilla\\ Graph\\ Convolution\end{tabular}} &
  28.270 &
  \textbf{0.192} &
  \textbf{0.109} &
  0.969 \\
Graph U-Net w/ Geo &
   &
  27.671 &
  0.208 &
  0.084 &
  \textbf{0.985} \\
Graph U-Net w/ FVF w/ Geo &
   &
  \textbf{2.474} &
  0.519 &
  -0.191 &
  0.962 \\ \bottomrule
\end{tabular}%
\caption{Graph U-Net performance evaluated using relative error and Spearman's correlation metrics.}\label{tab:sub_second}
\end{table*}
\jess{
The inferior performance on some metrics such as the volume MSE of $\nu_t$ and relative error of $C_L$ and $\rho_L$ is attributed to the loss function introduced in the work of \citet{airfrans}, which is not designed to explicitly minimise these metrics.
}

\subsection{Effectiveness of Residual Training}
Due to the limited cases demonstrating the effectiveness of residual training, additional experiments using the AirfRANS dataset were done with the MGN model, and Chen-GCNN model with IVE convolution. The results are shown in Table~\ref{tab:chengcnn_airfrans} and Table~\ref{tab:MGN_res} respectively.

\begin{table}[t!]
% \small
\centering
% \resizebox{\columnwidth}{!}{
 \begin{tabular}{@{}c|c|c@{}}
\toprule
\multirow{2}{*}{\textbf{Models}} &
  \multirow{2}{*}{\textbf{\begin{tabular}[c]{@{}c@{}}Conv \\ type\end{tabular}}} &
  \multirow{2}{*}{\textbf{\begin{tabular}[c]{@{}c@{}}MSE \\ $\times 10^{-2}$\end{tabular}}} \\ &  & \\ \midrule
Chen-GCNN w/ FVF w/ Geo &
  \multirow{3}{*}{IVE} &
  5.6104 \\
Chen-GCNN w/ Residual Training 
&  &\multirow{2}{*}{4.2166} \\
w/ FVF w/ Geo & & \\
\bottomrule
\end{tabular}%
% }
\caption{Performance evaluation using Chen-GCNN on Airfrans dataset.}
\label{tab:chengcnn_airfrans}
\end{table}
%%%%%%%%%%%%%%%%%%%%%
\begin{table}[htb!]
% \small
\centering
% \resizebox{.95\columnwidth}{!}{
\begin{tabular}{c|c}
\toprule %\hline
\textbf{Models} & \textbf{MSE} $\times10^-2$ \\ \midrule %\hline
MGN w/ FVF w/ Geo & 3.3811 \\
MGN w/ Residual Training  &\multirow{2}{*}{1.8855} \\
w/ FVF w/ Geo & \\
\bottomrule %\hline
\end{tabular}
% }
\caption{Performance evaluation using MGN.}
\label{tab:MGN_res}
\end{table}

The unlike the Chen-GCNN models trained on the 2DSHAPES dataset, these models used a batchsize of 1. The training parameters of the MeshGraphNet models remains as before.
Unlike in the case with CFDGCN, the original works did not assume the availability of the coarse solutions to use as estimates, and so this had to be considered an additional assumption. Also, while the baseline CFDGCN model already took the coarse fields as inputs, the baseline MeshGraphNet and Chen-GCNN models did not, and so the coarse fields had to be additionally included in the inputs for residual training.
\subsection{Extrapolation and Full Data Regimes}

While the experiments using the AirfRANS dataset were done using the scarce data regime, more CFDGCN models were trained using the Reynold's Number and AoA extrapolation regimes and the full data regime. The results are shown in Table~\ref{tab:cfdgcn_reynolds}. They demonstrate that the proposed methods remain effective under a variety of scenarios with different data availability.
\begin{table*}[htb!]
% \begin{table}[htb!]
% \small
\centering
% \resizebox{.95\columnwidth}{!}{
\begin{tabular}{c|c|c}
\toprule
\textbf{Models} & \textbf{\begin{tabular}[c]{@{}c@{}}Data\\Regime\end{tabular}} & \textbf{\begin{tabular}[c]{@{}c@{}}MSE\\ $\times10^{-2}$\end{tabular}} \\ \midrule
CFDGCN (Baseline) & \multirow{4}{*}{\begin{tabular}[c]{@{}c@{}}Reynold's\\Number\end{tabular}} & 0.3210 \\
CFDGCN w/ Geo &  & 0.0889 \\
CFDGCN w/ FVF w/ Geo &  & 0.0678 \\
CFDGCN w/ Residual Training w/ FVF w/ Geo &  & \textbf{0.0661} \\ \midrule \midrule
CFDGCN (Baseline) & \multirow{4}{*}{\begin{tabular}[c]{@{}c@{}}AoA\end{tabular}} & 0.9098 \\
CFDGCN w/ Geo &  & 0.1846 \\
CFDGCN w/ FVF w/ Geo &  & 0.1772 \\
CFDGCN w/ Residual Training w/ FVF w/ Geo &  & 	\textbf{0.1595} \\ \midrule \midrule
CFDGCN (Baseline) & \multirow{4}{*}{\begin{tabular}[c]{@{}c@{}}Full\end{tabular}} & 0.0737 \\
CFDGCN w/ Geo &  & 0.0668 \\
CFDGCN w/ FVF w/ Geo &  & 0.0655 \\
CFDGCN w/ Residual Training w/ FVF w/ Geo &  &  \textbf{0.0499} \\ \bottomrule
\end{tabular}
% }
\caption{Extrapolation and full data results of the CFDGCN models with vanilla graph convolution}
\label{tab:cfdgcn_reynolds}
\end{table*}
% \section{The rationale for residual training.} 
% 
\begin{figure*}[t!]
     \centering
     \begin{subfigure}[b]{\columnwidth}
         \centering
    \includegraphics[width=\textwidth]{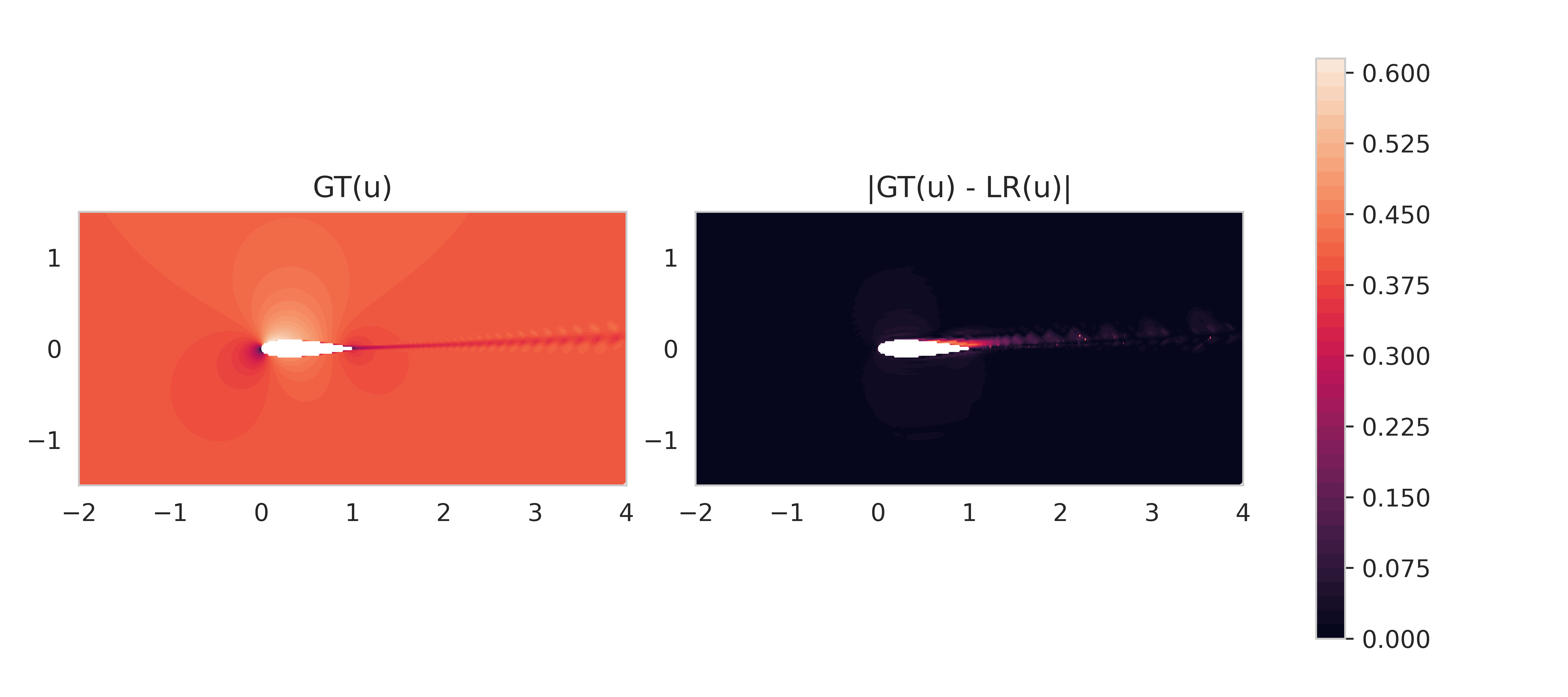}
         \caption{Velocity x-component (u)}
         \label{fig:res1}
     \end{subfigure}
     \hfill
     \begin{subfigure}[b]{\columnwidth}
         \centering
         \includegraphics[width=\textwidth]{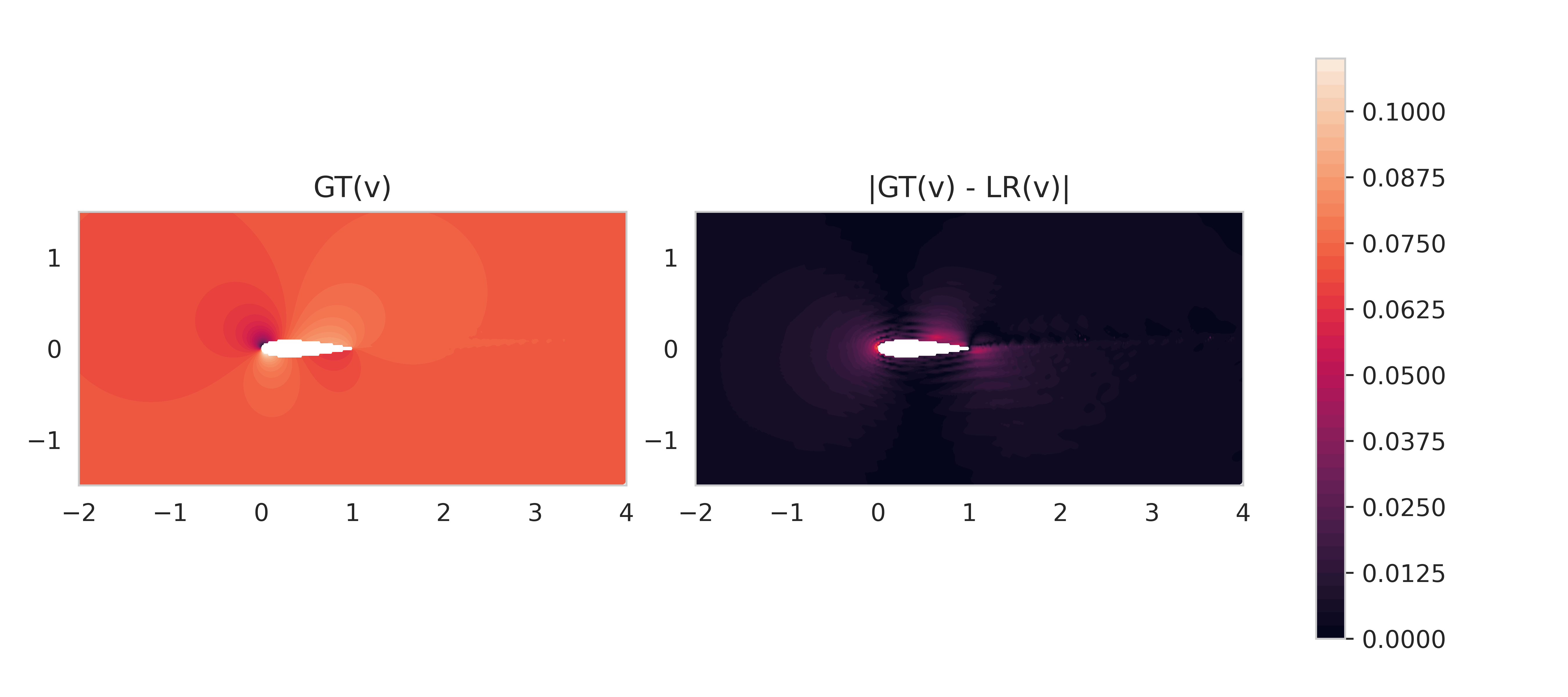}
         \caption{Velocity y-component (v).}
         \label{fig:res2}
     \end{subfigure}
     \hfill
     \begin{subfigure}[b]{\columnwidth}
         \centering
         \includegraphics[width=\textwidth]{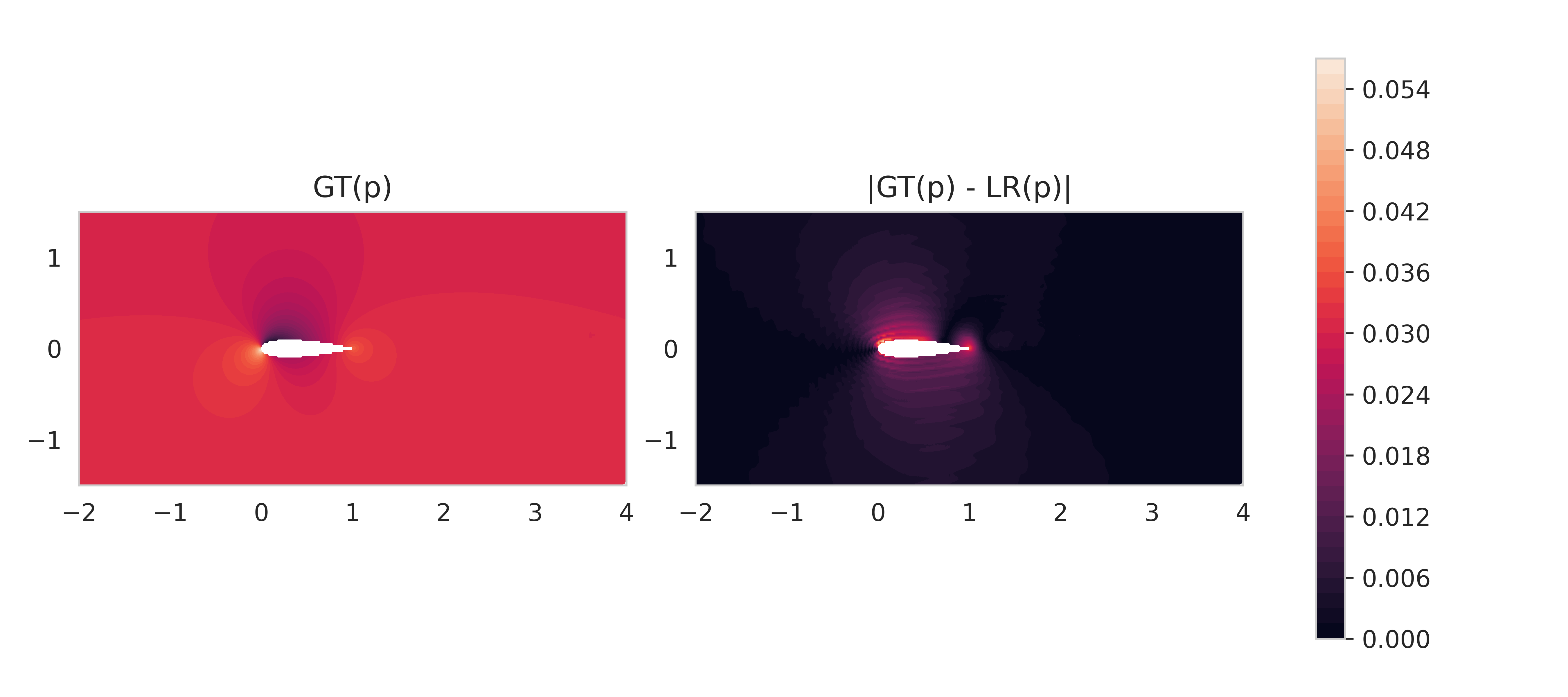}
         \caption{Pressure (p).}
         \label{fig:res3}
     \end{subfigure}
        \caption{An example airfoil from AirfRANS shows that (a) velocity x-component, u remains invariant in majority regions (left), hence the absolute value of residual ($|GT(u) - Upsample(LR(u))|$) is close to 0 in those regions (right). As a result, the model has less difficulty learning flow field values in these regions easing the learning process. We observe a similar phenomenon in other flow variables in (b) and (c). }
    \label{fig:res}
\end{figure*}
\subsection{Rationale for residual training}
As mentioned earlier, %in Section~\ref{sec:res},
the residual field helps the model to focus on the more nuanced areas where the low-resolution fields tend to be inaccurate. In Figure~\ref{fig:res}, we illustrate this phenomenon to emphasize the importance of residual training. We show ground truth velocity components (u,v) and pressure (p) corresponding to the ground truth flow field on the left, while the corresponding residual fields are on the right. In most of the regions, the residual is close to zero. Hence, the model has less difficulty learning flow field values in these regions, which are coloured in darker shades. 
% \section{Discussion on the effectiveness of FVF.}
% Add the following figure: x-axis => \#Training data, y-axis => Test Loss. 3 curves: Baseline, baseline + Geo and baseline + FVF + Geo

\begin{figure*}[t!]
     \centering
     \begin{subfigure}[b]{\columnwidth}
         \centering
\includegraphics[width=\textwidth]{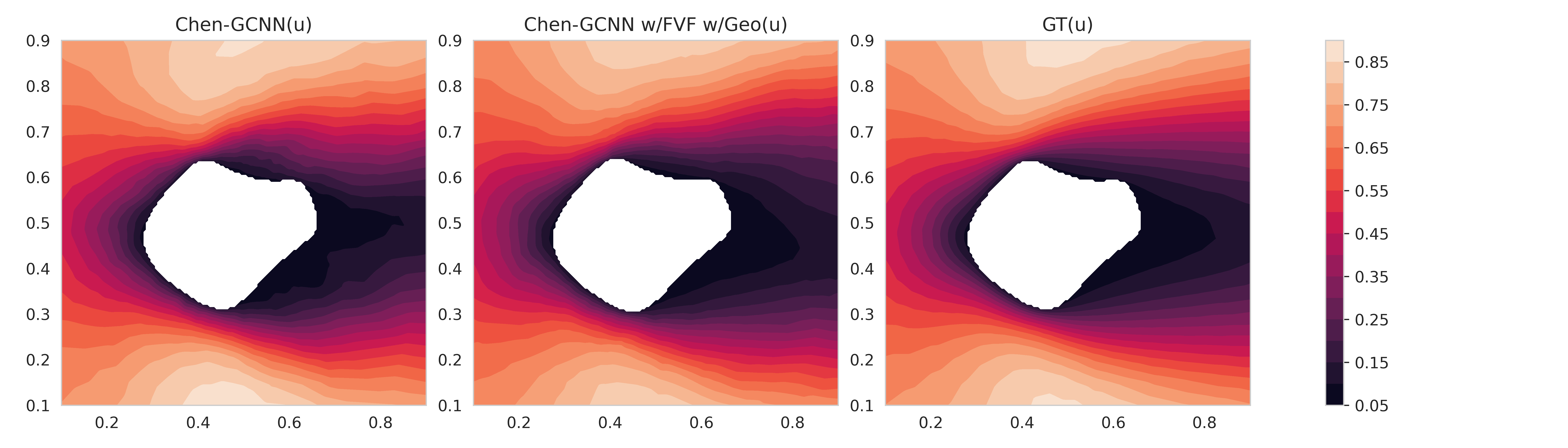}
         \caption{Velocity x-component (u) visualisation.}
         \label{fig:vis1}
     \end{subfigure}
     % \hfill
     \begin{subfigure}[b]{\columnwidth}
         \centering
         \includegraphics[width=\textwidth]{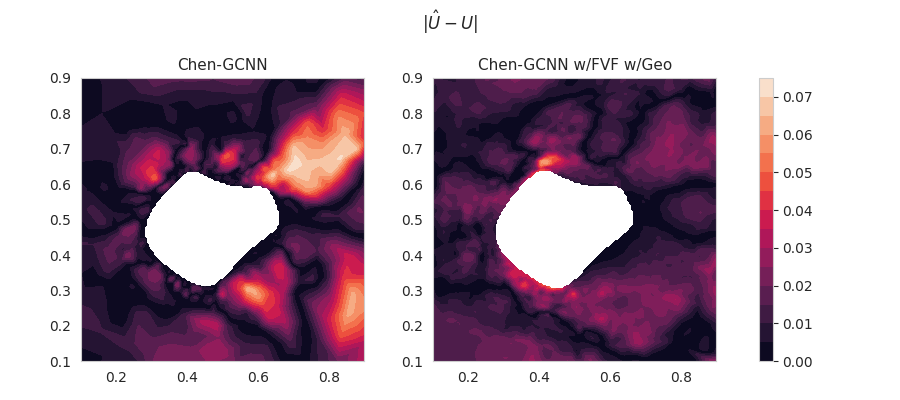}
          \caption{Absolute error of predictions.}
         \label{fig:vis2}
    \end{subfigure}
    \caption{(a)Visual comparison between Chen-GCNN (baseline) and Chen-GCNN w/ FVF w/ Geo: The velocity x-component ($u$) from baseline, Chen-GCNN w/ FVF w/ Geo and the ground truth (from left to right). (b) Absolute error comparison wrt. the ground truth: Chen-GCNN w/ FVF w/ Geo has a comparatively smaller high-error ($[0.050,0.075]$) region than the baseline. Here the darkest shade indicates a low-error region, and the lightest shade indicates a high-error region. }
    \label{fig:vis}
\end{figure*}
\begin{figure*}[t!]
\centering 
\includegraphics[width=0.8\textwidth]{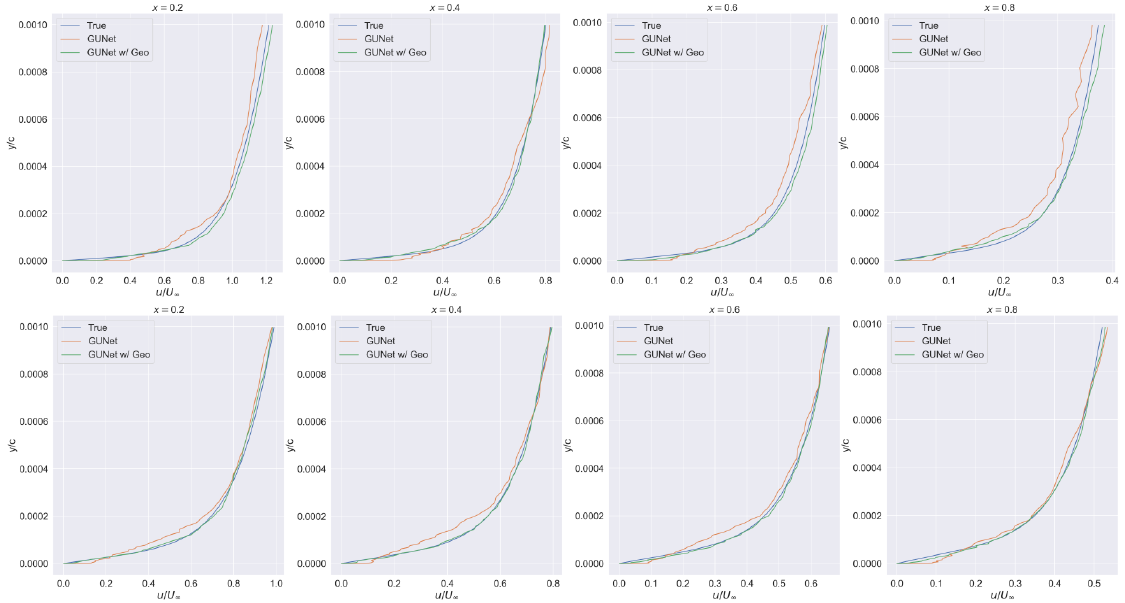}
\caption{Comparison of the predicted boundary layers profiles on two random test airfoils at four different abscissas in the scarce data regime with respect to the true ones. Each row of plots
represents a different airfoil, and each column of plots represents a different abscissa. The $x$
component of the velocity is denoted by u, and the inlet velocity is denoted
by $U_\infty$.}
\label{fig:bl_profile}
\end{figure*}

\section{Predictive Error Visualisation} 
Figure~\ref{fig:vis1} shows a comparison between the predicted velocity x-component flow fields, $u$, of the baseline method Chen-GCNN and Chen-GCNN w/ FVF w/ Geo. Both models used invariant edge convolutions.
%Figure~\ref{fig:vis} shows a comparison between the baseline method Chen-GCNN and Chen-GCNN w/ FVF w/ Geo in terms of the quality of their respective predicted velocity x-component flow fields, $u$. Both of the methods are trained with invariant edge convolution.
We observe that the predictions of both methods capture the flow features present in the Ground truth, GT($u$). A close inspection of the absolute error of the predicted flow field in figure~\ref{fig:vis2} reveals that Chen-GCNN w/ FVF w/ Geo produces relatively small regions with high error, whereas there are many regions with high error in the flow field predicted by Chen-GCNN. This observation indicates that incorporating our proposed features reduces the predictive error across different regions in the domain.

\section{Boundary Layer Profile} We analyse the velocity profile at the boundary layer in Figure~\ref{fig:bl_profile}. We plot the normalised distance from the airfoil boundary ($y/c$) vs the normalised velocity x-component $u/U_{\infty}$. Figure~\ref{fig:bl_profile} shows that GUNet w/ Geo predicts velocity $x$-component near the airfoil boundary more accurately compared to the baseline GUNet. 
% \input{7_rebuttal}
% \input{AAAI_rebuttal} % Old version
% \clearpage
% \input{V2AAAI_rebuttal} % Latest version

%%%%%%%%%%%%%%%%%%%%%%%%%%%%%%%%%%%%%%%%%%%%%%%%%%%%%%%%%%%%

\end{document}